\newtheorem{theorem}{Theorem}
\newtheorem{lemma}{Lemma}
\newtheorem{assumption}{Assumption}
\newtheorem{definition}{Definition}
\newtheorem{corollary}{Corollary}
\newtheorem{proposition}{Proposition}
\title{Unpacking the Implicit Norm Dynamics of \\ Sharpness-Aware Minimization in Tensorized Models}
\author{
Tianxiao Cao, Kyohei Atarashi, Hisashi Kashima\\
Graduate School of Informatics, Kyoto University, Kyoto, Japan\\
\texttt{cao.tianxiao.65w@st.kyoto-u.ac.jp, \{atarashi,kashima\}@i.kyoto-u.ac.jp}
}
\date{} 
\begin{document}

\maketitle

\begin{abstract}
Sharpness-Aware Minimization (SAM) has been proven to be an effective optimization technique for improving generalization in overparameterized models. While prior works have explored the implicit regularization of SAM in simple two-core scale-invariant settings, its behavior in more general tensorized or scale-invariant models remains underexplored. In this work, we leverage scale-invariance to analyze the norm dynamics of SAM in general tensorized models. We introduce the notion of \emph{Norm Deviation} as a global measure of core norm imbalance, and derive its evolution under SAM using gradient flow analysis. We show that SAM's implicit control of Norm Deviation is governed by the covariance between core norms and their gradient magnitudes. Motivated by these findings, we propose a simple yet effective method, \emph{Deviation-Aware Scaling (DAS)}, which explicitly mimics this regularization behavior by scaling core norms in a data-adaptive manner. Our experiments across tensor completion, noisy training, model compression, and parameter-efficient fine-tuning confirm that DAS achieves competitive or improved performance over SAM, while offering reduced computational overhead.
\end{abstract}

\section{Introduction}
The remarkable success of overparameterized deep neural networks has highlighted a central challenge in modern machine learning: generalization. These models possess enough capacity to memorize the entire training dataset, yet they often perform exceptionally well on unseen data~\cite{zhang2021understanding}. Understanding the mechanisms that prevent overfitting and promote generalization is a fundamental pursuit~\cite{srivastava2014dropout,belkin2019reconciling,ishida2020we,bisla2022low}.

A leading explanation involves the geometry of the loss landscape, where flatter minima are empirically associated with better generalization. Sharpness-Aware Minimization (SAM) \cite{foret2020sharpness} is a widely adopted optimization method that materializes this idea by minimizing the worst-case loss within a small perturbation neighborhood. SAM has demonstrated robust performance across a variety of tasks~\cite{kwon2021asam,bahri2022sharpness,ilbert2024samformer}, and its theoretical properties have been explored in depth \cite{wen2022sharpness,baek2024why,andriushchenko2022towards}, though mostly for standard, dense architectures.

Concurrently, there has been a long-standing interest in parameter-efficient modeling approaches, where structure is imposed on weights to reduce memory and computation for efficient and eco-friendly machine learning~\cite{memmel2024position}. These include tensor-decomposed layer~\cite{novikov2015tensorizing,hrinchuk2020tensorized} or low-rank adaptation (LoRA) modules~\cite{jie2023fact,yang2024loretta,yaras2024compressible,si2025maintaining,veeramacheneni2025canonical}, which offer significant compression and computational savings by modeling parameters as combinations of small tensor cores. Such structured parameterizations introduce scale-invariance and inter-dependencies that can interact non-trivially with optimization dynamics.

While prior work has shown that SAM implicitly promotes norm balancing in simple matrix factorization setups~\cite{li2024implicit}, its behavior in general multi-core tensorized models, where factor norms can differ drastically and influence training stability, is far less understood. This gap motivates our central research question: 

\textit{\textbf{How does the implicit regularization of SAM manifest in general, multi-core, scale-invariant tensorized models?}}

To answer the question, we investigate the implicit norm dynamics of both SGD and SAM applied on the general scale-invariant problem. We propose a global measure Norm Deviation $Q$ (Definition~\ref{def:global-norm-deviation}) to capture the norm imbalance. We present theoretical (Theorem~\ref{thm:sam-norm-deviation}) and empirical analysis (Fig.~\ref{fig:tucker}) and show that $Q$ is governed by the covariance between core norms and gradient norms, and that this effect is amplified by data noise. Inspired by the theoretical findings, we proposed a computationally efficient alternative of SAM, Deviation-Aware Scaling, by mimicking the norm dynamics through core scaling without computing the adversarial perturbation. We experimented on comprehensive tasks related to tensor-based parameterization. To summarize, our contributions are as follows:
\begin{itemize}
\item Norm Deviation $Q$ is proposed a global proxy for imbalance among tensor cores, and analyze its behavior under different optimizers. We prove conditions for local norm shrinkage and study its data-dependent dynamics.

\item We propose Deviation-Aware Scaling (DAS), a novel optimizer that avoids the adversarial perturbation step of SAM by distilling its norm effects into explicit scaling.

\item We empirically validate our findings across various domains, including tensorized neural networks, few-shot language model finetuning with low-rank adapters, and noisy-label learning. Our results show that SAM consistently improves performance in these settings, and that DAS inherits many of these benefits while reducing computational overhead.

\end{itemize}

\section{Related Work}

\paragraph{SAM and mechanism of SAM.} With the success of SAM, there emerged a line of extending works, including addressing the computational efficiency~\cite{du2021efficient,du2022sharpness,mueller2023normalization,ji2024single,xie2024sampa,deng2025asymptotic} and improving the performance~\cite{kwon2021asam,zhuang2022surrogate,kim2022fisher,liu2022random}. To explain the success of SAM, great efforts are made to understand SAM, mainly through simplified modeling~\cite{wen2022sharpness,compagnoni2023sde,bartlett2023dynamics}, empirical observations~\cite{andriushchenko2023modern}, and proxy measures~\cite{li2024implicit}.

\paragraph{Tensor decomposition in model compression.} Multi-core structured models or tensor decompositions have long been a promising direction for model compression. Existing works utilize decompositions including Tensor-Train~\cite{novikov2015tensorizing,yin2021towards,loeschcke2024coarse}, Tensor-Ring~\cite{wang2018wide,li2022heuristic,cao2024learning}, Tucker~\cite{phan2020stable}, and even arbitrary tensor networks~\cite{hayashi2019exploring}, achieving a practical reduction in storage through structured parameterization.

\paragraph{Tensor-based LoRA.} As large pre-trained models have become the standard, fine-tuning them for downstream tasks has become computationally prohibitive. LoRA~\cite{hu2022lora} injects low-rank trainable matrices into the layers of a Transformer, drastically reducing the number of trainable parameters. Recent advanced LoRA variants explore high-order tensor decomposition, such as CP~\cite{veeramacheneni2025canonical}, Tensor-Train~\cite{jie2023fact}, Tucker~\cite{jie2023fact,si2025maintaining}, and deep matrix factorization~\cite{yaras2024compressible}.

\section{Problem Statement}
\paragraph{Notations.} 
A scalar and a tensor (including matrices as second-order tensors) are denoted by 
$x$ and $\mathcal X$, respectively, unless otherwise specified.
For two tensors $ \mathcal A, \mathcal B \in \mathbb{R}^{n_1 \times \cdots \times n_d} $, we define the \emph{Frobenius inner product} as
$\left\langle \mathcal A, \mathcal B \right\rangle_F := \sum_{i_1=1}^{n_1} \cdots \sum_{i_d=1}^{n_d} \mathcal A_{i_1,\dots,i_d} \cdot \mathcal B_{i_1,\dots,i_d}$.
We denote by $\|\mathcal G\|^2_F = \left\langle \mathcal G, \mathcal G \right\rangle_F$, the squared Frobenius norm of a tensor $\mathcal G$, 
\textit{i.e.}, the sum of squares of all entries of $\mathcal G$. For a positive integer $n$, we denote $[n] := \{1, 2, \ldots, n\}$.

We consider a class of general scale-invariant models.
The parameters consist of a set of core tensors $\{\mathcal G_k\}_{k=1}^K$. 
These cores are composed via a multilinear reconstruction function 
$\Phi(\mathcal G_1, \ldots, \mathcal G_K)$ that produces the full tensor
$\mathcal T \in \mathbb{R}^{n_1 \times \cdots \times n_d}$. The problem of interest is to find the solution to the following optimization problem:
\begin{equation}
\label{eq:problem}
\min_{\mathcal G_1, \ldots, \mathcal G_K} f(\mathcal T) 
= f(\Phi(\mathcal G_1, \ldots, \mathcal G_K)),
\end{equation}
where $f(\cdot)$ is a scalar function. The key structural property is that the reconstruction function \( \Phi \) is \emph{multilinear} in each of cores.
That is, for any core index \( k \in \{1, \ldots, K\} \), and any scalar \( \lambda \in \mathbb{R} \), we have:
\[
\Phi(\mathcal{G}_1, \ldots, \lambda \mathcal{G}_k, \ldots, \mathcal{G}_K) = \lambda \cdot \Phi(\mathcal{G}_1, \ldots, \mathcal{G}_K).
\]
Specifically, scale-invariance refers to that $\{c_k \mathcal G_k\}_{k=1}^K$ have the same reconstructed tensor and objective value $\forall c_k\in \mathbb R,\quad \prod _{k=1}^K c_k = 1$. Such reconstruction functions arise naturally in a broad class of tensor decomposition models and tensor networks, where the output tensor \( \mathcal{T} \) is constructed via a sequence of tensor contractions\footnote{This formulation encompasses common structures such as Tucker, CP, Tensor-Train, and Tensor-Ring, as well as tensorized neural network layers where the weight tensor is parameterized via structured factorization.} over shared modes. 

\subsection{Sharpness-Aware minimization}
Sharpness-Aware minimization (SAM) \cite{foret2020sharpness} has emerged as a powerful technique 
for improving the generalization of various machine learning models by seeking flat solutions. 
SAM optimizes the worst-case 
loss over a neighborhood of the parameters, with which Problem~\eqref{eq:problem} can be reformulated as:
\begin{equation}
\label{eq:sam}
\min_{\mathcal{G}_1, \ldots, \mathcal{G}_K} \ \max_{\| \Delta \| \leq \rho}
f\left(\Phi(\mathcal{G}_1 + \Delta_1, \ldots, \mathcal{G}_K + \Delta_K)\right),
\end{equation}
where $\|\Delta\| := (\sum_{k=1}^K \|\Delta_k\|_F^2)^{1/2}$ and $\rho$ is the radius of perturbation. A practical implementation of SAM,
directly adapted from \cite{foret2020sharpness} is given in Algorithm \ref{alg:sam-cores}.
\begin{algorithm}[tb]
\caption{SAM for Problem \eqref{eq:sam} on tensorized models}
\label{alg:sam-cores}
\begin{algorithmic}[1]
\STATE \textbf{Initialize:} tensor cores $\{\mathcal{G}_k^{(0)}\}$, step size$\{\eta^{(t)}\}$, radius of perturbation $\rho$, and number of iterations $T$.
\FOR{$t = 0, \ldots, T-1$}
    \STATE Compute $g_k = \nabla_{\mathcal{G}_k} f(\Phi(\mathcal{G}_1^{(t)}, \ldots, \mathcal{G}_K^{(t)})), \forall k \in [K]$
    \FOR{$k = 1, \ldots, K$}
    \STATE $\tilde{\mathcal{G}}_k^{(t)} = \mathcal{G}_k^{(t)} + \rho \frac{g_k}{\sqrt{\sum_{j=1}^K \|g_j\|_F^2}}$
    \STATE $\tilde{g}_k = \nabla_{\mathcal{G}_k} f(\Phi(\tilde{\mathcal{G}}_1^{(t)}, \ldots, \tilde{\mathcal{G}}_K^{(t)}))$
    \STATE Update $\mathcal{G}_k^{(t+1)} = \mathcal{G}_k^{(t)} - \eta^{(t)} \tilde{g}_k$  via Adam or SGD
    \ENDFOR
\ENDFOR
\end{algorithmic}
\end{algorithm}
\paragraph{Mechanism understanding with norm.}
Analyzing the implicit bias of optimization algorithms on structured models by examining the norm of each structured component is becoming a powerful tool, as in the matrix factorization case \cite{liu2021noisy, li2024implicit}.
In the context of SAM, \textit{balancedness}~\cite{li2024implicit}, \textit{i.e.}, the difference in the squared Frobenius norm of two factors in the matrix factorization case, is introduced as a global metric to characterize the implicit regularization of SAM.
Motivated by this, we aim to examine the norm behaviors of cores in different optimizers and provide theoretical insights.

\section{Analysis of Norm Dynamics for General Scale-Invariant Problems}
Due to space constraints, we provide the proofs for all theoretical results in the Appendix. Following previous works on implicit regularization~\cite{arora2019implicit,gunasekar2017implicit,li2024implicit}, 
we consider gradient flow with infinitesimal stepsize $\eta \to 0$ for Problem~\eqref{eq:problem}:
\begin{equation}\label{eq:sgd-update-step}
\begin{aligned}
    &g_k^{(t)} = \nabla_{\mathcal G_k} f(\Phi(\mathcal G_1^{(t)}, \ldots, \mathcal G_K^{(t)})), \\
    &\mathcal G_k^{(t+1)} = \mathcal G_k^{(t)} - \eta g_k^{(t)},\quad \forall k\in [K].
\end{aligned}
\end{equation}

In the infinitesimal stepsize limit \( \eta \to 0 \), 
time $t$ is treated as continuous.
We show that the dynamics of the squared norm among all cores are equivalent.
\begin{theorem}[Norm Dynamics of SGD]
\label{thm:equivalent-norm}
Applying the update step Equation \eqref{eq:sgd-update-step} with infinitesimal stepsize $\eta \to 0$ to the Problem \eqref{eq:problem}, the time derivatives of the squared Frobenius norms of all cores are equal:
$$
\frac{d}{dt}\|\mathcal G_1^{(t)}\|^2_F = \cdots = \frac{d}{dt}\|\mathcal G_K^{(t)}\|^2_F.
$$
\end{theorem}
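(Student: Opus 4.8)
The plan is to reduce the statement to a single scalar identity that is essentially Euler's homogeneous function theorem applied separately in each core direction. First I would differentiate the squared norm along the gradient flow $\dot{\mathcal G}_k^{(t)} = -g_k^{(t)}$ (the $\eta \to 0$ limit of Equation~\eqref{eq:sgd-update-step}), obtaining
\[
\frac{d}{dt}\|\mathcal G_k^{(t)}\|_F^2 = 2\big\langle \mathcal G_k^{(t)}, \dot{\mathcal G}_k^{(t)}\big\rangle_F = -2\big\langle \mathcal G_k^{(t)}, g_k^{(t)}\big\rangle_F .
\]
Thus it suffices to prove that $\big\langle \mathcal G_k, \nabla_{\mathcal G_k} f(\Phi(\mathcal G_1,\ldots,\mathcal G_K))\big\rangle_F$ does not depend on the index $k$.

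To show this, fix $k$ and introduce the scalar function $h_k(\lambda) := f\big(\Phi(\mathcal G_1,\ldots,\lambda \mathcal G_k,\ldots,\mathcal G_K)\big)$ for $\lambda \in \mathbb R$. Computing $h_k'(1)$ in two ways yields the result. On one hand, the chain rule through $\Phi$ (treating $\lambda\mathcal G_k$ as the $k$-th argument, whose $\lambda$-derivative is $\mathcal G_k$) gives $h_k'(1) = \big\langle \nabla_{\mathcal G_k} f(\Phi(\mathcal G_1,\ldots,\mathcal G_K)), \mathcal G_k\big\rangle_F = \langle g_k, \mathcal G_k\rangle_F$, i.e. exactly the quantity above. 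On the other hand, multilinearity of $\Phi$ in the $k$-th core gives $\Phi(\mathcal G_1,\ldots,\lambda\mathcal G_k,\ldots,\mathcal G_K) = \lambda\,\mathcal T$ with $\mathcal T = \Phi(\mathcal G_1,\ldots,\mathcal G_K)$, so $h_k(\lambda) = f(\lambda\mathcal T)$ and hence $h_k'(1) = \langle \nabla f(\mathcal T), \mathcal T\rangle_F$, which is manifestly independent of $k$. Combining the two expressions gives $\langle \mathcal G_k, g_k\rangle_F = \langle \nabla f(\mathcal T),\mathcal T\rangle_F$ for every $k$, and substituting into the first display proves $\frac{d}{dt}\|\mathcal G_1^{(t)}\|_F^2 = \cdots = \frac{d}{dt}\|\mathcal G_K^{(t)}\|_F^2$ (all equal to $-2\langle \nabla f(\mathcal T),\mathcal T\rangle_F$).

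I do not expect a substantive obstacle here; the argument is short calculus once the right reformulation is in hand. The only points that need care are (i) applying the chain rule cleanly through a general, possibly high-order, contraction $\Phi$ so that the partial gradient $\nabla_{\mathcal G_k} f$ pairs against $\mathcal G_k$ as the correct directional derivative, and (ii) the standing smoothness/gradient-flow assumptions that let $t$ be treated as continuous so the derivatives exist. If anything, the main difficulty is notational — stating the multilinear chain rule for an arbitrary tensor network rather than a fixed decomposition — rather than mathematical.
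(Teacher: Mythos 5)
Your proposal is correct and follows essentially the same route as the paper: the paper first proves a directional-derivative lemma stating $\langle \Phi(\ldots,\mathcal V,\ldots), \nabla_{\mathcal T} f\rangle_F = \langle \mathcal V, \nabla_{\mathcal G_k} f\rangle_F$ and then applies it with $\mathcal V = \mathcal G_k$, which is exactly your two-way computation of $h_k'(1)$ showing every $\langle \mathcal G_k, g_k\rangle_F$ equals $\langle \mathcal T, \nabla_{\mathcal T} f\rangle_F$. Your Euler-homogeneity phrasing is just the specialization of that lemma to the radial direction, so the arguments coincide in substance.
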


Next, we tackle the dynamics of the squared Frobenius norm in SAM.
The update step of SAM in Algorithm \ref{alg:sam-cores} can be rewritten as:
\begin{equation}
\label{eq:sam-update-step}
\begin{aligned}
    &g_k^{(t)} = \nabla_{\mathcal G_k} f(\Phi(\mathcal G_1^{(t)}, \ldots, \mathcal G_K^{(t)})), \\
    &\tilde{\mathcal G}_k^{(t)} = \mathcal G_k^{(t)} + \rho u^{(t)} g_k^{(t)},\\
    &\tilde{g}_k^{(t)} = \nabla_{\mathcal G_k} f(\Phi(\tilde{\mathcal G}_1^{(t)}, \ldots, \tilde{\mathcal G}_K^{(t)})),\\
    &\mathcal G_k^{(t+1)} = \mathcal G_k^{(t)} - \eta \tilde{g}_k^{(t)},\quad \forall k\in [K],
\end{aligned}
\end{equation}
where $u^{(t)} = (\sum_{j=1}^K \|g_j^{(t)}\|_F^2)^{-1/2}$ is the normalization factor, and $\rho>0$ is the radius of the perturbation.

We use a standard assumption on the Lipschitz smoothness of $f(\cdot)$ following
previous works on analyzing SAM~\cite{andriushchenko2022towards,wen2022sharpness,mueller2023normalization,li2024implicit} and optimization~\cite{ge2015escaping,wang2024generalized}:
\begin{assumption}[Smoothness]\label{assumption:smoothness}
There exists $L > 0$ such that for any $\mathcal X, \mathcal Y\in \mathbb{R}^{n_1 \times \cdots \times n_d}$, it holds that
$$\|\nabla f(\mathcal{X}) - \nabla f(\mathcal{Y})\|_F \leq L \|\mathcal{X} - \mathcal{Y}\|_F.$$
\end{assumption}

\citeauthor{li2024implicit} analyzed the dynamics of \textit{difference} between the squared Frobenius norms of two cores in the matrix factorization case. Under the multi-core setting in our paper, it is natural to derive the dynamics for the difference between the squared norms of two cores, as follows:
\begin{theorem}[Pairwise Norm Dynamics under SAM]
\label{thm:sam-dynamics}
Applying the update steps \eqref{eq:sam-update-step} with infinitesimal stepsize $\eta \to 0$, the gradient flow
of SAM satisfies that $\forall i,j \in [K]$ with $i\neq j$:
\begin{align*}
\frac{d}{dt}\left(\|\mathcal G_i^{(t)}\|_F^2 - \|\mathcal G_j^{(t)}\|_F^2\right) 
&= 2\rho u^{(t)} \left(\|g_i^{(t)}\|_F^2 - \|g_j^{(t)}\|_F^2\right) \\
&\quad + O(\rho^2 L).
\end{align*}
\end{theorem}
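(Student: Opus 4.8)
The plan is to compute the instantaneous time derivative of each squared core norm along the SAM gradient flow and then isolate the piece that survives after taking the difference. In the limit $\eta \to 0$ the update $\mathcal G_k^{(t+1)} = \mathcal G_k^{(t)} - \eta \tilde g_k^{(t)}$ becomes $\dot{\mathcal G}_k^{(t)} = -\tilde g_k^{(t)}$, so $\frac{d}{dt}\|\mathcal G_k^{(t)}\|_F^2 = 2\langle \mathcal G_k^{(t)},\dot{\mathcal G}_k^{(t)}\rangle_F = -2\langle \mathcal G_k^{(t)},\tilde g_k^{(t)}\rangle_F$. The structural fact I would use is an Euler-type identity coming from multilinearity of $\Phi$: for fixed values of the other cores, $\mathcal G_k \mapsto \Phi(\ldots,\mathcal G_k,\ldots)$ is linear, hence by the chain rule $\langle \mathcal G_k,\nabla_{\mathcal G_k} f(\Phi(\cdot))\rangle_F = \langle \Phi(\cdot),\nabla f(\Phi(\cdot))\rangle_F$, which is \emph{independent of }$k$. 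This is exactly the mechanism behind Theorem~\ref{thm:equivalent-norm}, and I would apply it at the perturbed point $\tilde{\mathcal G}^{(t)}$ to get $\langle \tilde{\mathcal G}_k^{(t)},\tilde g_k^{(t)}\rangle_F = \langle \tilde{\mathcal T}^{(t)},\nabla f(\tilde{\mathcal T}^{(t)})\rangle_F$, where $\tilde{\mathcal T}^{(t)} := \Phi(\tilde{\mathcal G}_1^{(t)},\ldots,\tilde{\mathcal G}_K^{(t)})$, again a $k$-independent quantity.

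Next I substitute $\mathcal G_k^{(t)} = \tilde{\mathcal G}_k^{(t)} - \rho u^{(t)} g_k^{(t)}$ into $\langle \mathcal G_k^{(t)},\tilde g_k^{(t)}\rangle_F$, which gives
\[
\langle \mathcal G_k^{(t)},\tilde g_k^{(t)}\rangle_F = \langle \tilde{\mathcal T}^{(t)},\nabla f(\tilde{\mathcal T}^{(t)})\rangle_F - \rho u^{(t)} \langle g_k^{(t)},\tilde g_k^{(t)}\rangle_F .
\]
Since the first term does not depend on $k$, it cancels when forming the difference, leaving the \emph{exact} identity
\[
\frac{d}{dt}\!\left(\|\mathcal G_i^{(t)}\|_F^2 - \|\mathcal G_j^{(t)}\|_F^2\right) = 2\rho u^{(t)}\!\left(\langle g_i^{(t)},\tilde g_i^{(t)}\rangle_F - \langle g_j^{(t)},\tilde g_j^{(t)}\rangle_F\right).
\]
It then remains to replace $\langle g_k^{(t)},\tilde g_k^{(t)}\rangle_F$ by $\|g_k^{(t)}\|_F^2$ at the cost of a controlled remainder. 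Writing $\langle g_k,\tilde g_k\rangle_F = \|g_k\|_F^2 + \langle g_k,\tilde g_k - g_k\rangle_F$ and applying Cauchy–Schwarz, this reduces to bounding $\|\tilde g_k^{(t)} - g_k^{(t)}\|_F$. Because $\tilde g_k$ and $g_k$ are the gradients of $f\circ\Phi$ at two points whose separation $\tilde{\mathcal G}^{(t)} - \mathcal G^{(t)}$ has norm exactly $\rho$ (since $u^{(t)}(\sum_j\|g_j^{(t)}\|_F^2)^{1/2}=1$), Assumption~\ref{assumption:smoothness} on $f$ together with the multilinearity of $\Phi$ — which controls both the change of the reconstructed tensor and the change of the per-core Jacobian under an $O(\rho)$ perturbation of the cores — yields $\|\tilde g_k^{(t)} - g_k^{(t)}\|_F = O(\rho L)$. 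Plugging this back, and using $u^{(t)}\|g_k^{(t)}\|_F \le 1$, turns the remainder into $O(\rho^2 L)$, which is the claimed statement.

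I expect the last step — the quantitative bound on $\|\tilde g_k - g_k\|_F$, and hence on the remainder — to be the main obstacle. Unlike in the two-factor matrix case of \cite{li2024implicit}, the per-core Jacobian of $\Phi$ here depends on all the other cores, so perturbing them changes not only the argument of $\nabla f$ but also the linear map pulling $\nabla f$ back onto core $k$; making the $O(\rho^2 L)$ estimate precise requires bookkeeping of the reconstruction map's multilinear structure (operator norms of the contractions and of the Hessian of $f\circ\Phi$) along the trajectory, which I would handle by a first-order Taylor expansion of $\tilde g_k^{(t)}$ in $\rho$ about $\rho=0$ and absorbing the curvature terms into the $O(\rho^2 L)$ symbol. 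A minor point to verify is that $u^{(t)}$ is well defined, i.e.\ that the core gradients do not all vanish simultaneously; at such a stationary point both sides are zero and the identity holds trivially.
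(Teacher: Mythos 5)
Your proposal is correct and follows essentially the same route as the paper's proof: the Euler-type identity from Lemma~\ref{lemma:gradient-derivative} applied at the perturbed point to cancel the $k$-independent term $\langle\tilde{\mathcal T}^{(t)},\nabla_{\mathcal T}\tilde f\rangle_F$, then Cauchy--Schwarz with $u^{(t)}\|g_k^{(t)}\|_F\le 1$, Assumption~\ref{assumption:smoothness}, and the multilinear expansion of $\tilde{\mathcal T}^{(t)}-\mathcal T^{(t)}$ to absorb the remainder into $O(\rho^2 L)$. Your closing caveat about the perturbed per-core Jacobian (the pullback map onto core $k$ also changes with the other cores) is a fair point that the paper's one-line bound $\|\tilde g_i^{(t)}-g_i^{(t)}\|_F\le L\|\tilde{\mathcal T}^{(t)}-\mathcal T^{(t)}\|_F$ glosses over, but handling it as you describe only modifies the hidden constants in the $O(\rho^2)$ term, not the stated result.
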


Theorems~\ref{thm:equivalent-norm} and~\ref{thm:sam-dynamics} can 
be viewed as non-trivial multi-core generalizations of the results 
in~\cite{li2024implicit}, which focused on matrix factorization. 
Specifically, Theorem~\ref{thm:equivalent-norm} shows that under 
gradient flow, standard SGD preserves the difference in squared 
Frobenius norms between any two cores, \textit{i.e.}, 
$\|\mathcal G_i\|_F^2 - \|\mathcal G_j\|_F^2$ remains constant. 
In contrast, Theorem~\ref{thm:sam-dynamics} reveals that SAM 
introduces a dynamic regulation mechanism: 
the rate of change in $\|\mathcal G_i\|_F^2 - \|\mathcal G_j\|_F^2$ 
is proportional to the difference in squared gradient norms, 
$\|g_i\|_F^2 - \|g_j\|_F^2$.
These results provide important insights into how individual core norms evolve relative to each other. In the multi-core setting under SAM, however, pairwise norm differences are local and not sufficient to characterize the overall norm dynamics of the whole system. To fully understand the norm behavior of all cores under SGD and SAM, we need a new measure to analyze the \textit{global} norm dynamics to capture the collective norm dynamics of the model.
\paragraph{A global measure.} While directly using pairwise norm differences is appealing, summing or linearly combining them leads to cancellation and fails to provide a meaningful global measure. For instance, with three cores where their squared norms are $\|\mathcal{G}_1\|_F^2=2$, $\|\mathcal{G}_2\|_F^2=10$, and $\|\mathcal{G}_3\|_F^2=18$, a circular sum of differences $(\|\mathcal{G}_1\|_F^2 - \|\mathcal{G}_2\|_F^2) + (\|\mathcal{G}_2\|_F^2 - \|\mathcal{G}_3\|_F^2) + (\|\mathcal{G}_3\|_F^2 - \|\mathcal{G}_1\|_F^2)$ yields $(-8) + (-8) + (16) = 0$, masking the significant underlying imbalance. 
To address this, we introduce a global measure that captures the collective norm dynamics of all cores.
\begin{definition}[Global Squared Norm Deviation]
\label{def:global-norm-deviation}
For cores $\{\mathcal G_k\}_{k=1}^K$, the global squared norm deviation is defined as:
\begin{equation}
\label{eq:global-norm-deviation}
Q := \sum_{k=1}^K \left(\|\mathcal G_k\|_F^2 - \frac{1}{K}\sum_{i=1}^K \|\mathcal G_i\|^2_F\right)^2.
\end{equation}
For brevity, we will refer to this quantity as the \emph{Norm Deviation} when the context is clear.
\end{definition}

The Norm Deviation $Q$ captures the spread of the squared Frobenius norms of all cores. A larger $Q$ indicates greater imbalance among the cores. All cores have the same Frobenius norm if and only if $Q=0$.
Also, it can be verified that the Norm Deviation $Q$ has an alternative expression:
\begin{equation}\label{eq:sqsum-pairwise}
Q = \frac{1}{2K}\sum_{i,j=1}^K \left(\|\mathcal G_i\|_F^2 - \|\mathcal G_j\|_F^2\right)^2.
\end{equation}
This shows that $Q$ aggregates all the pairwise imbalances in the squared Frobenius norms of the cores.
Next, we derive the dynamics of the Norm Deviation $Q$ under SGD and SAM. 
The Norm Deviation $Q$ of SGD is conserved, from a direct application of 
Theorem~\ref{thm:equivalent-norm}:
\begin{corollary}[Norm Deviation Dynamics under SGD]
\label{cor:sgd-norm-deviation}
Applying the update steps \eqref{eq:sgd-update-step} with infinitesimal stepsize 
$\eta \to 0$ to the problem \eqref{eq:problem},
$$
\frac{dQ}{dt} = 0.
$$
\end{corollary}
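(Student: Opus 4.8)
The plan is to derive the result directly from Theorem~\ref{thm:equivalent-norm}, so the proof is essentially a one-line corollary once the right form of $Q$ is in hand. The cleanest route uses the pairwise expression in Equation~\eqref{eq:sqsum-pairwise}, $Q = \frac{1}{2K}\sum_{i,j=1}^K \big(\|\mathcal G_i^{(t)}\|_F^2 - \|\mathcal G_j^{(t)}\|_F^2\big)^2$. First I would invoke Theorem~\ref{thm:equivalent-norm}, which states that under gradient flow all the time derivatives $\frac{d}{dt}\|\mathcal G_k^{(t)}\|_F^2$ coincide; equivalently, for every pair $i \neq j$ the difference $\|\mathcal G_i^{(t)}\|_F^2 - \|\mathcal G_j^{(t)}\|_F^2$ has zero time derivative along the flow. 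Differentiating $Q$ termwise, each summand contributes $\frac{1}{K}\big(\|\mathcal G_i^{(t)}\|_F^2 - \|\mathcal G_j^{(t)}\|_F^2\big)\big(\frac{d}{dt}\|\mathcal G_i^{(t)}\|_F^2 - \frac{d}{dt}\|\mathcal G_j^{(t)}\|_F^2\big)$, and the second factor vanishes by Theorem~\ref{thm:equivalent-norm}, giving $\frac{dQ}{dt}=0$.

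As an alternative working straight from Definition~\ref{def:global-norm-deviation} (Equation~\eqref{eq:global-norm-deviation}), I would set $\bar n(t) := \frac{1}{K}\sum_{i=1}^K \|\mathcal G_i^{(t)}\|_F^2$ and let $c(t)$ denote the common value of $\frac{d}{dt}\|\mathcal G_k^{(t)}\|_F^2$ supplied by Theorem~\ref{thm:equivalent-norm}. Then $\frac{d\bar n}{dt} = \frac{1}{K}\sum_i c(t) = c(t)$, so each deviation rate satisfies $\frac{d}{dt}\|\mathcal G_k^{(t)}\|_F^2 - \frac{d\bar n}{dt} = c(t) - c(t) = 0$, and the chain rule yields $\frac{dQ}{dt} = 2\sum_{k=1}^K \big(\|\mathcal G_k^{(t)}\|_F^2 - \bar n(t)\big)\cdot 0 = 0$. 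Either computation also makes clear the stronger fact that $Q$ is constant, not merely stationary, along SGD gradient flow.

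Since this is a direct corollary, I do not anticipate any real obstacle. The only points needing a (routine) justification are that $\|\mathcal G_k^{(t)}\|_F^2$ is differentiable in $t$ in the $\eta \to 0$ limit — which follows from the smoothness of $\Phi$ and $f$ already used to obtain Theorem~\ref{thm:equivalent-norm} — and that differentiation commutes with the finite sums defining $Q$, which is immediate.
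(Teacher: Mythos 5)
Your first route is exactly the paper's proof: invoke Theorem~\ref{thm:equivalent-norm} to conclude each pairwise difference $\|\mathcal G_i^{(t)}\|_F^2 - \|\mathcal G_j^{(t)}\|_F^2$ has zero time derivative, then differentiate the pairwise form \eqref{eq:sqsum-pairwise} of $Q$ termwise. The argument is correct, and your second variant via the mean $\bar n(t)$ is just an equivalent reformulation, so there is nothing to add.
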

The dynamics of $Q$ under SAM is as follows:
\begin{theorem}[Norm Deviation Dynamics under SAM]
\label{thm:sam-norm-deviation}
Applying the update steps \eqref{eq:sam-update-step} with infinitesimal stepsize $\eta \to 0$, the gradient flow of SAM satisfies:
\[
\frac{dQ}{dt} = 4\rho u^{(t)} K \cdot \mathrm{Cov}\left(\|\mathcal G_k^{(t)}\|_F^2, \|g_k^{(t)}\|_F^2\right) + O(\rho^2 L),
\]
where \( \mathrm{Cov}(x_k, y_k) := \frac{1}{K} \sum_{k=1}^K (x_k - \bar{x})(y_k - \bar{y}) \)
denotes the empirical covariance, with \( \bar{x}, \bar{y} \) the means over \( k \in [K] \).
\end{theorem}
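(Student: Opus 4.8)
The plan is to reduce the statement to the already-established pairwise dynamics of Theorem~\ref{thm:sam-dynamics}, combined with the elementary fact that a centered sum of products is an empirical covariance. Throughout, write $s_k := \|\mathcal G_k^{(t)}\|_F^2$, $\bar s := \frac{1}{K}\sum_{i=1}^K s_i$, $\delta_k := s_k - \bar s$, and $\overline{\|g\|^2} := \frac{1}{K}\sum_{j=1}^K \|g_j^{(t)}\|_F^2$, so that $Q = \sum_{k=1}^K \delta_k^2$ and $\sum_{k=1}^K \delta_k = 0$.

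First I would upgrade the pairwise law to a law for each centered norm $\delta_k$. Fixing $k$ and summing the identity of Theorem~\ref{thm:sam-dynamics} over $j \in [K]$ (the $j=k$ term being trivial), the left-hand side telescopes to $K\frac{d}{dt}s_k - \frac{d}{dt}\sum_j s_j = K\frac{d}{dt}\delta_k$, while the right-hand side collapses to $2\rho u^{(t)} K\!\left(\|g_k^{(t)}\|_F^2 - \overline{\|g\|^2}\right) + O(\rho^2 L)$, since each of the $K$ per-pair remainders is $O(\rho^2 L)$. Dividing by $K$ gives
\begin{equation*}
\frac{d}{dt}\delta_k = 2\rho u^{(t)}\!\left(\|g_k^{(t)}\|_F^2 - \overline{\|g\|^2}\right) + O(\rho^2 L).
\end{equation*}
Next I would differentiate $Q$ directly. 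Since $\frac{dQ}{dt} = 2\sum_k \delta_k \frac{d}{dt}\delta_k$, substituting the previous display yields
\begin{equation*}
\frac{dQ}{dt} = 4\rho u^{(t)} \sum_{k=1}^K \delta_k\!\left(\|g_k^{(t)}\|_F^2 - \overline{\|g\|^2}\right) + 2\sum_{k=1}^K \delta_k\cdot O(\rho^2 L).
\end{equation*}
The remainder term is $O(\rho^2 L)$ because $\left|\sum_k \delta_k\cdot O(\rho^2 L)\right| \le O(\rho^2 L)\sum_k |\delta_k|$ and the $\delta_k$ stay bounded along the flow. For the leading term, $\sum_k \delta_k = 0$ lets me recenter the second factor, so that $\sum_k \delta_k\!\left(\|g_k^{(t)}\|_F^2 - \overline{\|g\|^2}\right) = \sum_k (s_k - \bar s)\!\left(\|g_k^{(t)}\|_F^2 - \overline{\|g\|^2}\right) = K\,\mathrm{Cov}\!\left(\|\mathcal G_k^{(t)}\|_F^2, \|g_k^{(t)}\|_F^2\right)$ by the definition of empirical covariance in the statement. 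Combining the two pieces gives exactly $\frac{dQ}{dt} = 4\rho u^{(t)} K\,\mathrm{Cov}\!\left(\|\mathcal G_k^{(t)}\|_F^2, \|g_k^{(t)}\|_F^2\right) + O(\rho^2 L)$.

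I expect the substantive difficulty to sit upstream in Theorem~\ref{thm:sam-dynamics} rather than in this argument: that result rests on the Euler-type identity $\langle \mathcal G_k^{(t)}, g_k^{(t)}\rangle_F = \langle \mathcal T, \nabla f(\mathcal T)\rangle_F$ forced by multilinearity of $\Phi$ (which makes the leading-order drift of $s_k$ core-independent, as in Theorem~\ref{thm:equivalent-norm}), together with a first-order Taylor expansion of $\tilde g_k^{(t)}$ about $g_k^{(t)}$ whose error is controlled by Assumption~\ref{assumption:smoothness}. Granting that, the only genuine care needed here is to check that the $O(\rho^2 L)$ remainders survive being summed against the $\delta_k$ — which they do, because $\sum_k \delta_k = 0$ and the iterates are bounded over the horizon considered — and to recognize the centered sum as the empirical covariance. (Alternatively one can bypass Theorem~\ref{thm:sam-dynamics}: expand $\frac{d}{dt}s_k = -2\langle \mathcal G_k^{(t)}, \tilde g_k^{(t)}\rangle_F$, substitute $\mathcal G_k^{(t)} = \tilde{\mathcal G}_k^{(t)} - \rho u^{(t)} g_k^{(t)}$, apply the Euler identity at the perturbed cores to obtain $\frac{d}{dt}s_k = -2c(t) + 2\rho u^{(t)}\|g_k^{(t)}\|_F^2 + O(\rho^2 L)$ for a $k$-independent $c(t)$, and then run the same $Q$-differentiation.)
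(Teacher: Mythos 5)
Your proposal is correct and follows essentially the same route as the paper: the paper packages your first step (summing the pairwise identity of Theorem~\ref{thm:sam-dynamics} over $j$ and dividing by $K$) as its Lemma~\ref{lemma:core-mean}, and then, exactly as you do, differentiates $Q=\sum_k(\|\mathcal G_k^{(t)}\|_F^2-\bar s)^2$ term by term and identifies the centered sum with $K\,\mathrm{Cov}(\|\mathcal G_k^{(t)}\|_F^2,\|g_k^{(t)}\|_F^2)$. Your explicit remark about the remainders surviving the weighted sum (using boundedness of the centered norms) is handled at the same, implicit level of rigor in the paper's $O(\rho^2 L)$ bookkeeping, so there is no substantive difference.
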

Theorem~\ref{thm:sam-norm-deviation} shows that the Norm Deviation $Q$ under SAM evolves according to the covariance between the squared Frobenius norms of the cores and their corresponding gradient norms. Specifically, if $\|g_k\|_F^2$ is large when $\|\mathcal G_k\|_F^2$ is small (negative covariance), SAM grows small cores faster and helps all cores to encourage equalization in their norms. 
\begin{figure}[t]
\centering
\includegraphics[width=\columnwidth]{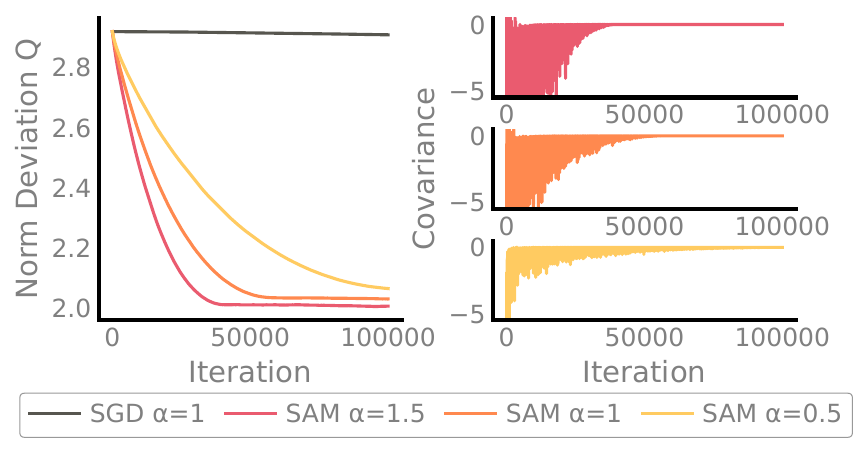}
\caption{
    Implicit regularization of SAM on norm deviation (Definition~\ref{def:global-norm-deviation}).
    We consider an example of the Tucker-2 model with a loss function $\mathbb E[\|AGB - (T+\alpha N)\|^2_F]$,
    where $T$ is the target tensor, $N$ is the Gaussian noise, and $\alpha$ controls the strength
    of noise. Left: Norm Deviation $Q$ vs. iterations. Right: $\mathrm{Cov}(\|\mathcal G_k\|_F^2, \|g_k\|_F^2)$
    vs. iterations.
}
    \label{fig:tucker}
\end{figure}
In contrast, positive covariance means that large cores will grow faster, 
leading to larger imbalance among the cores. 
\paragraph{Toward norm balancing tendencies under SAM.}
Our theoretical analysis reveals that SAM exhibits strong norm-balancing tendencies at a local level. Specifically, the pairwise norm difference, \textit{i.e.}, $\left|\|\mathcal G_i\|_F^2 - \|\mathcal G_j\|_F^2\right|, \forall i,j \in [K]$, shrinks \textit{locally} when being large (Proposition~\ref{prop:pairwise-norm-shrinkage} in Appendix). 
This local corrective dynamic for every pair suggests a global effect; one can intuit from Equation~\eqref{eq:sqsum-pairwise} that an accumulation of these pairwise shrinkages would lead to an overall decrease in the Norm Deviation $Q$.
While our proof formally establishes this balancing effect at a local level, our experiments confirm its global impact. Empirical observations consistently show that $Q$ decreases during training (see Fig.~\ref{fig:tucker}), supporting the conjecture that SAM promotes norm balancing through an accumulation of these local shrinkage effects.
In the controlled Tucker-2 experiment of Fig.~\ref{fig:tucker}, we introduce additive Gaussian noise with varying magnitude controlled by a scalar $\alpha$. As $\alpha$ increases, the gradients become noisier, and the covariance between core norms and their corresponding gradient norms grows. This larger covariance corresponds to a faster reduction in $Q$, reflecting a stronger balancing effect under SAM. This phenomenon illustrates how the implicit regularization strength of SAM adapts to the noise level, aligning with the ``data-responsive'' interpretation observed in the two-factor setting of~\cite{li2024implicit}. The benefits of norm balancing in gradient-based optimization are well-documented for both tensor decomposition~\cite{du2018algorithmic,razin2022implicit,hariz2022implicit} and neural networks~\cite{neyshabur2017exploring}, suggesting that the norm-regulating dynamics induced by SAM are potentially favorable for optimization.

\paragraph{Extension to multi-layer models.} Our analysis naturally extends models that consist of multiple scale-invariant models across layers, such as tensorized neural networks, SAM governs the Norm Deviation layer-wisely. See Theorem~\ref{thm:multi-layer-sam-norm-deviation} in the Appendix as a multi-layer extension of Theorem~\ref{thm:sam-norm-deviation}.

\section{Mimicking the Implicit Regularization of SAM with Explicit Control}
The theoretical insights suggest that SAM induces a tendency to control the core norms, particularly when the covariance between core and gradient magnitudes is high. Prior works turn or enhance implicit effects of optimization with explicit regularizers, such as \cite{wei2020implicit,barrett2021implicit,li2024implicit,li2024efficiency}. Motivated by this, this section aims to address the following question: \textit{\textbf{Can we design a method that mimics the implicit regularization to utilize its beneficial impact?}} By an explicit method that replicates the implicit regularization of SAM, it is possible to \textbf{(1)} reduce the extra gradient calculation of SAM (line 2 in Algorithm~\ref{alg:sam-cores}) if we use a clever design, and \textbf{(2)} decouple the norm control from the optimization process, which allows us to control the norm of cores independently of the optimization process.

\subsection{Deviation-Aware Scaling}
In this section, we gain insights from theory and
 introduce a novel method \texttt{Deviation-Aware Scaling} (DAS), which
address the above question by explicitly controlling the Norm Deviation $Q$
through a scaling approach.
We adopt a simple exemplar that scales the cores by a factor in each iteration to control norms, following the implementation of weight decay~\cite{loshchilov2018decoupled}:
\begin{align}
    \mathcal G_k^{(t+\frac{1}{2})} &= (1+\lambda_k^{(t)}) \mathcal G_k^{(t)}, \label{eq:scale-core-a} \\
    \mathcal G_k^{(t+1)} &= \mathcal G_k^{(t+\frac{1}{2})} - \eta g_k^{(t)}, \label{eq:scale-core-b}
\end{align}
for each core $k\in [K]$ at iteration $t$, where $\eta$ is a finite stepsize and $\lambda_k^{(t)}\in \mathbb R$ for core $k$ at iteration $t$.
Based on Corollary~\ref{cor:sgd-norm-deviation}, we can assume only the scaling step \eqref{eq:scale-core-a} controls the Norm Deviation $Q$.
We can derive a closed-form expression for the scaling factor $\lambda_k^{(t)}$ to control the Norm Deviation $Q$
to match the implicit regularization of SAM using only SGD.
\paragraph{Deriving $\lambda_k$.} 
With small $\lambda_k^{(t)}$, the change in norm caused by the scaling step \eqref{eq:scale-core-a} is:
\begin{align*}
\Delta \|\mathcal G_k\|^2_F &:= \|\mathcal G_k^{(t+\frac{1}{2})}\|^2_F - \|\mathcal G_k^{(t)}\|^2_F\approx 2\lambda_k^{(t)} \|\mathcal G_k^{(t)}\|^2_F.
\end{align*}
We would like to set the $\lambda_k^{(t)}$ so that the dynamics in the Norm Deviation
$Q$ caused by the scaling step \eqref{eq:scale-core-a} matches the dynamics of SAM in Theorem~\ref{thm:sam-norm-deviation}.
With a small enough stepsize $\eta$, the change in $Q$ under SAM is approximately:
\begin{align*}
\Delta Q_{\mathrm{SAM}} &\approx \eta \cdot \left(\frac{dQ}{dt}\right)_{\mathrm{SAM}}\\
& \approx 4\eta \rho u^{(t)} K \cdot \mathrm{Cov}\left(\|\mathcal G_k^{(t)}\|_F^2, \|g_k^{(t)}\|_F^2\right),
\end{align*}
where the term $O(\rho^2 L)$ is ignored for small $\rho$.
At the same time, the change in $Q$ caused by the scaling step \eqref{eq:scale-core-a} is:
\begin{align*}
\Delta Q &\approx  4\sum_{k=1}^K \left(\|\mathcal G_k^{(t)}\|_F^2 - \frac{1}{K}\sum_{i=1}^K \|\mathcal G_i^{(t)}\|_F^2\right) \cdot \lambda_k^{(t)} \|\mathcal G_k^{(t)}\|^2_F,
\end{align*}
following the definition of $Q$ (Definition~\ref{def:global-norm-deviation}). 
We would like to match the two changes in $Q$, \textit{i.e.}, $\Delta Q = \Delta Q_{\mathrm{SAM}}$.
This leads to the following closed-form expression for $\lambda_k^{(t)}$:
\begin{equation}
\label{eq:lambda-eq}
\lambda_k^{(t)} = \frac{\rho u^{(t)}\cdot \eta}{\|\mathcal G_k^{(t)}\|_F^2} \cdot \left(\|g_k^{(t)}\|_F^2 - \bar{g}\right),
\end{equation}
where $\bar{g} = \frac{1}{K}\sum_{i=1}^K \|g_i^{(t)}\|_F^2$ is the average squared gradient norm over all cores at iteration $t$.
However, since this expression is derived from approximations of the true dynamics, there is no guarantee that SAM's original perturbation radius $\rho$ is the optimal choice for controlling the strength of this explicit scaling.
We therefore replace $\rho$ with a new hyperparameter, $\alpha$, which directly controls the strength of the scaling. This decouples our method from SAM's settings and leads to the final Deviation-Aware Scaling (DAS) algorithm, summarized in Algorithm \ref{alg:scale-cores}.

\begin{algorithm}[tb]
\caption{Deviation-Aware Scaling (DAS)}
\label{alg:scale-cores}
\begin{algorithmic}[1]
\STATE \textbf {Initialize:} tensor cores $\{\mathcal{G}_k^{(0)}\}$, stepsize $\{\eta^{(t)}\}$, coefficient $\{\alpha^{(t)}\}$, and number of iterations $T$.
\FOR{$t = 0, \ldots, T-1$}
    \STATE Compute $g_k^{(t)} = \nabla_{\mathcal{G}_k} f(\Phi(\mathcal{G}_1^{(t)}, \ldots, \mathcal{G}_K^{(t)})), \forall k \in [K]$
    \FOR{$k = 1, \ldots, K$}
    \STATE $\lambda_k^{(t)} = \frac{\eta^{(t)} \alpha^{(t)} u^{(t)}}{\|\mathcal{G}_k^{(t)}\|_F^2} \cdot \left(\|g_k^{(t)}\|_F^2 - \bar{g}\right)$, \\ where $\bar{g} = \frac{1}{K}\sum_{i=1}^K \|g_{i}^{(t)}\|_F^2$, $u^{(t)} = (K\cdot\bar g)^{-1/2}$
    \STATE Update $\mathcal{G}_k^{(t+1)} = (1+\lambda_k^{(t)}) \mathcal{G}_k^{(t)} - \eta g_k^{(t)}$\\ via Adam or SGD
    \ENDFOR
\ENDFOR
\end{algorithmic}
\end{algorithm}

\section{Experiments}
We conduct a comprehensive set of experiments to show the effectiveness of SAM and DAS for various tensor-based models, including tensor completion, training tensorized neural networks, and improving tensor-based parameter-efficient fine-tuning (PEFT) for large language model adaptation. See Appendix for more experiment details.

\begin{figure}[t]
\centering
\includegraphics[width=\columnwidth]{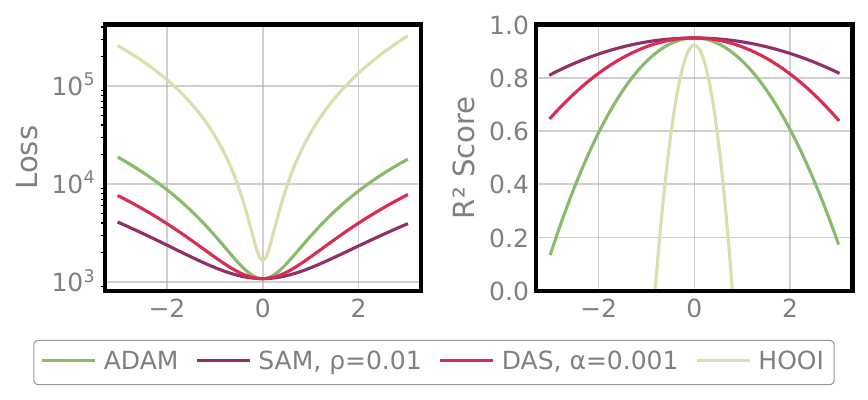}
\caption{
    1-D visualization of train loss and evaluation metric vs. perturbations on the COVID dataset. Left: Training loss. Right: Evaluation $\mathrm{R}^2$ score. The x-axis is the size of a fixed directional perturbation applied to model parameters.
}
    \label{fig:tucker-covid-loss}
\end{figure}
\begin{table}[t]
\centering
\begin{tabular}{l|cccc}
    \toprule
    Method & HOOI & ADAM & SAM & DAS \\
    \midrule
    $\mathrm{R}^2$ score & $0.9268$ & $0.9482$ & $0.9485^{**}$ & $0.9484^{*}$ \\
    \bottomrule
\end{tabular}
\caption{Results of Tucker on COVID dataset. Best and second-best results per row are marked with $^{**}$ and $^{*}$, respectively. This notation applies to Tables \ref{tab:tucker-covid}-\ref{tab:tt-resnet-finetune}.}
\label{tab:tucker-covid}
\end{table}
\subsection{Tucker Decomposition for Tensor Completion}
We perform experiments on the Tucker decomposition for tensor completion, using the real-world data COVID dataset from the library tensorly~\cite{kossaifi2019tensorly}. 
We randomly mask 70\% of the entries as the evaluation set to be recovered, and use the remaining 30\% as the training set.
We follow the setup of \cite{hariz2024implicit} and use a three-order Tucker with multilinear rank $(8,6,8)$ and an MSE loss function optimized with ADAM~\cite{kingma2015adam}.

We use SAM and DAS with a base optimizer ADAM to compare with the vanilla ADAM. 
Tucker-HOOI~\cite{kolda2009tensor} is also included as a baseline, available in tensorly.
The results averaged from 3 dataset splits are shown in Table~\ref{tab:tucker-covid} and Fig.~\ref{fig:tucker-covid-loss}. In the table, we report the coefficient of determination $\mathrm{R}^2$ score, which measures the proportion of variance explained by the model.
SAM and DAS show marginal improvements over ADAM, while all three gradient-based methods substantially outperform the traditional HOOI baseline.
The loss curves in Fig.~\ref{fig:tucker-covid-loss} reveal an important insight: while SAM and DAS achieve only marginal improvements in the final $\mathrm{R}^2$ score, they find flatter minima compared to vanilla ADAM, which is precisely the intended effect of SAM's sharpness-aware optimization. Importantly, DAS successfully captures this flatness property through explicit norm control rather than SAM's direct perturbation-based approach, validating our theoretical analysis that the norm dynamics is a key driver of SAM's beneficial effects.

\subsection{Applications to Tensorized Neural Networks}
Our setting of general scale-invariant problems applies to 
a wide range of multi-layer tensorized neural networks.

\paragraph{Training Tensorized Neural Networks from Scratch.}
We evaluate the effectiveness of SAM and DAS on CIFAR-10~\cite{krizhevsky2009learning} using ResNet-20~\cite{he2016deep} parameterized by tensor decompositions. Each convolution layer is replaced with a set of tensor cores, which reconstruct the full weight tensor. We experiment with both CP~\cite{kolda2009tensor} and Tensor-Ring (TR)~\cite{zhao2016tensor} decompositions, with five different ranks for each method.
The models are trained from scratch using SGD, SAM, and DAS (using SGD as the base optimizer). For both SAM and DAS, the corresponding hyperparameters $\rho$ and $\alpha$ are tuned independently over the shared search space $\{0.001, 0.005, 0.01, 0.05, 0.1\}$ using a validation split from the training set.

Figure~\ref{fig:tresnet-cifar10} reports the mean and standard error of accuracy improvements (over SGD) from 10 independent runs, after training for 180 epochs. Both SAM and DAS consistently outperform SGD across all decomposition types and ranks. Interestingly, the performance gains from SAM tend to increase with higher tensor ranks, particularly in the Tensor-Ring setting, suggesting that SAM is especially beneficial for larger tensorized models. DAS also provides consistent, though slightly smaller, improvements across configurations. This highlights that DAS explicitly captures and benefits from the norm dynamics that underpin SAM’s implicit regularization, offering an efficient alternative.

\begin{figure}
    \centering
    \includegraphics[width=3.3in]{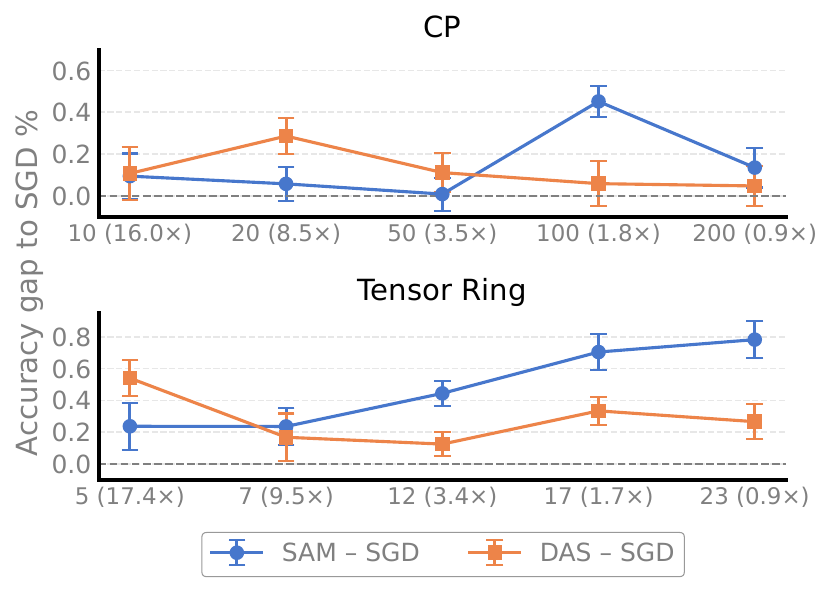}
    \caption{
Accuracy improvements over SGD when training tensorized ResNet-20 on CIFAR-10 with CP (top) and Tensor-Ring (bottom) decompositions. The x-axis indicates the chosen rank for each decomposition, with the corresponding model compression ratio shown in parentheses (\textit{e.g.}, 
$a\times$ denotes the ratio of original to compressed model parameters). Error bars indicate standard error over 10 runs.
}
    \label{fig:tresnet-cifar10}
\end{figure}

\paragraph{Tensorized Neural Networks under Label Noise.}
It is shown in \cite{foret2020sharpness,kwon2021asam} that SAM is robust to label noise
on general deep models, without specific designs for label noise.
We test the robustness of SAM and DAS on a tensorized neural network, using a compact version of ResNet-32, where all convolution layers are re-parameterized using Tensor-Ring decomposition with ranks set as in Appendix.
We use a symmetric label noise~\cite{van2015learning,jiang2020beyond} with corruption rate $40\%, 60\%, 80\%$ on CIFAR-10 training set and a clean test set. Table~\ref{tab:tr-resnet-label-noise} shows the test accuracies of TR-ResNet-32 on CIFAR-10 with label noise. 
We compare the performance of SGD, SAM, and DAS (with SGD as the base optimizer).
As shown in Table~\ref{tab:tr-resnet-label-noise}, both SAM and DAS improve robustness to label noise over SGD, with SAM achieving significantly better test accuracy under higher noise levels. While DAS does not match SAM's robustness, it still outperforms SGD, indicating partial resistance to label corruption. Prior work~\cite{baek2024why} attributes SAM's robustness to its influence mainly on network Jacobian effects that DAS does not replicate. This suggests that DAS may derive its robustness from its explicit control over parameter norm dynamics, leading to better optimization behavior than SGD.

\begin{table}[t]
\centering
\begin{tabular}{lccc}
    \toprule
    Noise rate& SGD & SAM & DAS \\
    \midrule
    40\% & $83.76_{\pm 0.24}$ & $86.59^{**}_{\pm 0.16}$ & $84.35^*_{\pm 0.40}$ \\
    60\% & $77.41_{\pm 1.08}$ & $82.03^{**}_{\pm 0.25}$ & $77.73^*_{\pm 0.27}$ \\
    80\% & $57.33_{\pm 0.44}$ & $67.13^{**}_{\pm 1.70}$ & $60.74^*_{\pm 0.57}$ \\
    \midrule
    Runtime (s) & $0.039^{**}$ & $0.090$ & $0.054^*$  \\
    \bottomrule
\end{tabular}
\caption{Test accuracies and runtime for SGD, SAM, and DAS on CIFAR-10 with label noise using TR-ResNet-32. Means and standard deviations are computed over 3 runs.}
\label{tab:tr-resnet-label-noise}
\end{table}

\paragraph{Finetuning Pre-trained Models after Compression.}

We consider a practical scenario where a pre-trained model is compressed using tensor decomposition and then fine-tuned to restore the performance lost due to compression, following a paradigm similar to \cite{phan2020stable}. 
We compress all convolution layers in a pre-trained ResNet-18 using Tensor-Train (TT) decomposition, with ranks suggested in \cite{yin2021towards}. This results in a TT-ResNet-18 with 4.4M parameters, compared to the original ResNet-18 with 11.2M parameters.
The compressed TT-ResNet-18 is fine-tuned on the ImageNet-1k~\cite{deng2009imagenet}.
We compare the performance of SGD, SAM, and DAS (with SGD as the base optimizer). We consider a lightweight fine-tuning with $15$ epochs.
Table~\ref{tab:tt-resnet-finetune} shows the mean and standard deviation of Top-1 and Top-5 accuracies over 5 runs. Note that the compressed model before fine-tuning achieves $36.74\%$/$64.32\%$ Top-1/-5 accuracy, which is significantly lower than the original ResNet-18. We observe that both SAM and DAS outperform SGD, with SAM achieving the best performance. DAS uses approximately the same runtime as SGD, but its performance is close to SAM, showing its effectiveness.

\begin{table}[t]
\centering
\begin{tabular}{lccc}
    \toprule
    & SGD & SAM & DAS \\
    \midrule
    Top-1 & $65.47_{\pm 0.14}$ & $66.27^{**}_{\pm 0.07}$ & $66.16^{*}_{\pm 0.21}$ \\
    Top-5 & $86.54_{\pm 0.14}$ & $87.12^{**}_{\pm 0.05}$ & $86.96^{*}_{\pm 0.05}$ \\    
    \midrule
    Runtime (s) & $0.254^{**}$ & $0.425^{*}$ & $0.254^{**}$ \\
    \bottomrule
\end{tabular}
\caption{Top1, Top5, and runtime for SGD, SAM, and DAS fine-tuned on ImageNet using compressed TT-ResNet-18.}
\label{tab:tt-resnet-finetune}
\end{table}

\begin{table*}[t]
\centering
\begin{tabular}{ll|llllll|l}
\toprule
\textbf{RoBERTa} & & \textbf{SST-2} & \textbf{SST-5} & \textbf{SNLI} & \textbf{MNLI} & \textbf{RTE} & \textbf{TREC} & \textbf{Avg.(↑)} \\
\midrule
\multicolumn{2}{l|}{Zero-Shot$^\dag$} &
79.0 &
35.5 &
50.2 &
48.8 &
51.4 &
32.0 &
49.5 \\
\midrule
\multicolumn{2}{l|}{Full fine-tuning} &
94.01$^{**}_{\pm 0.46}$ &
56.84$^{**}_{\pm 0.86}$ &
88.38$^{**}_{\pm 0.43}$ &
84.32$^{**}_{\pm 0.72}$ &
83.16$_{\pm 1.67}$ &
96.92$^{*}_{\pm 0.55}$ &
83.94$^{*}$ \\
\multicolumn{2}{l|}{LoRA ($r=8$)} &
90.53$_{\pm 1.10}$ &
51.60$_{\pm 0.77}$ &
83.58$_{\pm 1.24}$ &
76.38$_{\pm 2.99}$ &
77.55$_{\pm 4.32}$ &
95.72$_{\pm 0.41}$ &
79.23 \\
\midrule
FLoRA & ADAM &
93.99$^{*}_{\pm 0.26}$ &
56.70$_{\pm 0.80}$ &
87.70$_{\pm 0.54}$ &
83.96$_{\pm 0.67}$ &
83.61$_{\pm 1.30}$ &
96.64$_{\pm 0.34}$ &
83.78 \\
($r=8$) & SAM &
93.76$_{\pm 0.48}$ &
56.76$_{\pm 1.39}$ &
88.16$^{*}_{\pm 0.67}$ &
83.56$_{\pm 0.73}$ &
83.68$^{*}_{\pm 1.62}$ &
97.08$^{**}_{\pm 0.27}$ &
83.83 \\
& DAS &
93.94$_{\pm 0.56}$ &
56.82$^{*}_{\pm 0.90}$ &
87.90$_{\pm 0.57}$ &
84.28$^{*}_{\pm 0.63}$ &
83.97$^{**}_{\pm 0.87}$ &
96.80$_{\pm 0.42}$ &
83.95$^{**}$ \\
\bottomrule
\end{tabular}
\caption{Results on RoBERTa-large fine-tuned on GLUE in the low-data regime across 5 runs with different sampled data. Results marked with $\dag$ are reported by~\cite{malladi2023fine}. Best and second-best results per column are marked with $^{**}$ and $^{*}$, respectively. This notation applies to Tables \ref{tab:roberta-result} and \ref{tab:loretta-results}.}
\label{tab:roberta-result}
\end{table*}

\begin{table*}[t]
\centering
\begin{tabular}{ll|r|lllllll|l}
\toprule
\multicolumn{2}{l|}{\textbf{OPT-6.7B}} & \textbf{Params} & \textbf{CB} & \textbf{BoolQ} & \textbf{WSC} & \textbf{COPA} & \textbf{ReCoRD} & \textbf{SQuAD} & \textbf{DROP} & \textbf{Avg.(↑)} \\
\midrule
\multicolumn{2}{l|}{Zero-Shot}       &  & 60.7 & 65.9 & 37.5 & 80 & 76.9 & 69.5 & 26.4 & 59.56 \\
\midrule
\multicolumn{2}{l|}{Full fine-tuning}& 6658.47M & 71.4 & 68.7 & 63.5$^{**}$ & 82$^{**}$ & 78.7$^{**}$ & 81.8 & 29.2 & 67.89 \\
\multicolumn{2}{l|}{LoRA ($r=16$)}     & 8.39M & 87.5$^{**}$ & 77.8$^{*}$ & 63.5$^{**}$ & 81$^{*}$ & 77.1 & 85.9 & 32.7$^{**}$ & 72.21$^{**}$ \\
\midrule
LoRETTA & ADAM & \multirow{3}{*}{0.96M} & 80.4 & 76.6 & 58.7 & 80 & 77.3$^{*}$ & 86.7$^{*}$ & 32.1 & 70.25 \\
  ($r=16$)     & SAM  &                         & 85.7$^{*}$ & 78.6$^{**}$ & 63.5$^{**}$ & 77 & 76.8 & 88.7$^{**}$ & 31.8 & 71.72$^{*}$ \\
        & DAS  &                         & 82.1 & 77.2 & 63.5$^{**}$ & 81$^{*}$ & 77.2 & 86.2 & 32.7$^{**}$ & 71.41 \\
\bottomrule
\end{tabular}
\caption{Results on OPT-6.7B fine-tuned on SuperGLUE and generation tasks in the low-data regime.}
\label{tab:loretta-results}
\end{table*}

\begin{table}[t]
\centering
\begin{tabular}{lccc}
    \toprule
    & ADAM & SAM & DAS \\
    \midrule
    Runtime ($\downarrow$) & $1\times$ & $2\times$ & $1.04\times$ \\
    \bottomrule
\end{tabular}
\caption{Normalized runtime on OPT-6.7B using LoRETTA.}
\label{tab:loretta-runtime}
\end{table}

\subsection{Tensor-based Parameter-Efficient Fine-Tuning}
We evaluate SAM and DAS on two recent tensor-based low-rank adaptation methods, FLoRA~\cite{si2025maintaining} and LoRETTA~\cite{yang2024loretta} for fine-tuning language models. See Appendix for details of FLoRA and LoRETTA.

\paragraph{Improving FLoRA on RoBERTa-large.}
FLoRA utilizes Tucker decomposition to parameterize the incremental update for low-rank adaptation.
We fine-tune RoBERTa-large, a pre-trained language model with 355M parameters, following the setting in~\cite{malladi2023fine}. We use a challenging few-shot learning setting, sampling 512 examples per class. Results are summarized in Table~\ref{tab:roberta-result}. Both SAM and DAS improve over the FLoRA baseline. Notably, DAS slightly outperforms both SAM and full fine-tuning. We hypothesize that this performance gap arises because SAM, despite its sharpness-minimization objective, may exhibit unintended side effects in low-rank subspace adaptation—such as converging to sharp minima due to perturbations outside the update subspace~\cite{liflat}. In contrast, DAS isolates and distills the beneficial norm dynamics of SAM while avoiding such drawbacks.

\paragraph{Improving LoRETTA on OPT-6.7B.}

LoRETTA utilizes TT decomposition to parameterize the incremental update for low-rank adaptation. 
We fine-tune OPT-6.7B~\cite{zhang2022opt}, an autoregressive language model with 6.7B parameters on the SuperGLUE tasks~\cite{wang2019superglue} and
generation tasks including SQuAD~\cite{rajpurkar2016squad} and DROP~\cite{dua2019drop} using LoRETTA.
We follow the setup in \cite{yang2024loretta} to use a challenging low-data setting with 1000/500/1000 examples for training/validation/testing, using a prompt-based fine-tuning suggested in~\cite{malladi2023fine}.
Results of SAM and DAS on OPT-6.7B are summarized in Table~\ref{tab:loretta-results}. As shown in the zero-shot and full fine-tuning results, the distribution shift between pre-trained data and fine-tuned data is large and can be overfitted by full fine-tuning. SAM improves LoRETTA by a significant margin, and DAS achieves a similar improvement.
We also compare against LoRA~\cite{hu2022lora} with rank 16, using approximately $8\times$ trainable parameters of LoRETTA. SAM and DAS enhance the much lighter LoRETTA to achieve a more competitive performance compared to LoRA, demonstrating the effectiveness of the proposed methods. Moreover, we compared the runtime of ADAM, SAM, and DAS. In Table~\ref{tab:loretta-runtime}, DAS saves more than $90\%$ runtime of SAM but still achieves a competitive performance compared to SAM, suggesting scaling as a strong yet efficient alternative for the extra gradient of SAM.

\section{Conclusion}
In this work, we investigated the implicit regularization of SAM in general, multi-core tensorized models. Our theoretical analysis reveals that a key mechanism of SAM is the norm dynamics, governed by the covariance between the norms of tensor cores and gradient magnitudes. We distilled this insight into a simple yet effective method, DAS, explicitly mimicking this regularization. Extensive experiments on tasks including tensor completion, model compression, and parameter-efficient fine-tuning validate the effectiveness of SAM and DAS as a computationally efficient alternative.

\bibliography{aaai2026}

\appendix

\section{Missing proof and results}

\subsection{Lemma~\ref{lemma:gradient-derivative} and proof}

\begin{lemma}
\label{lemma:gradient-derivative}
For any tensor $\mathcal V$ that has the same shape as $\mathcal G_m$,
$$
\left\langle \Phi (\ldots, \mathcal G_{m-1}, \mathcal V, \mathcal G_{m+1},\ldots),
\nabla_{\mathcal T} f\right\rangle_F = \left\langle \mathcal V, \nabla_{\mathcal G_m} f\right\rangle_F,
$$
where $\mathcal T = \Phi(\mathcal G_1, \ldots, \mathcal G_m, \ldots, \mathcal G_K)$.
\end{lemma}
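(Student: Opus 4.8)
The plan is to read the identity as an instance of the chain rule along a one-parameter perturbation, taking advantage of the fact that $\Phi$ is linear in its $m$-th slot so that the perturbation passes through $\Phi$ without higher-order error.

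First I would fix all cores except the $m$-th and introduce the map $\Psi_m(\mathcal V) := \Phi(\mathcal G_1,\ldots,\mathcal G_{m-1},\mathcal V,\mathcal G_{m+1},\ldots,\mathcal G_K)$. By the multilinearity property stated in the Problem Statement, $\Psi_m$ is a \emph{linear} map from the space of tensors with the shape of $\mathcal G_m$ into $\mathbb R^{n_1\times\cdots\times n_d}$, and $\mathcal T=\Psi_m(\mathcal G_m)$. The object $\nabla_{\mathcal G_m}f$ appearing in the lemma is, by convention in the paper, the Frobenius gradient of the composite $\mathcal G_m\mapsto f(\Psi_m(\mathcal G_m))$, i.e.\ the Riesz representative (with respect to $\langle\cdot,\cdot\rangle_F$) of its differential.

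Next I would compute the directional derivative of this composite along an arbitrary direction $\mathcal V$ in two ways. On one hand, by the definition of the gradient, it equals $\langle \mathcal V,\nabla_{\mathcal G_m}f\rangle_F$. On the other hand, since $\Psi_m$ is linear its Jacobian at any point is $\Psi_m$ itself, so the chain rule gives that the directional derivative equals $\langle \Psi_m(\mathcal V),\nabla_{\mathcal T}f\rangle_F=\langle\Phi(\ldots,\mathcal G_{m-1},\mathcal V,\mathcal G_{m+1},\ldots),\nabla_{\mathcal T}f\rangle_F$. Equating the two expressions yields the claim. Concretely, one may instead argue without invoking abstract differentials: by multilinearity the expansion $\Phi(\ldots,\mathcal G_m+\epsilon\mathcal V,\ldots)=\mathcal T+\epsilon\,\Phi(\ldots,\mathcal V,\ldots)$ is exact, so differentiating $\epsilon\mapsto f\bigl(\mathcal T+\epsilon\,\Phi(\ldots,\mathcal V,\ldots)\bigr)$ at $\epsilon=0$ gives $\langle\Phi(\ldots,\mathcal V,\ldots),\nabla_{\mathcal T}f\rangle_F$, while the same derivative computed directly from the definition of $\nabla_{\mathcal G_m}f$ equals $\langle\mathcal V,\nabla_{\mathcal G_m}f\rangle_F$.

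I do not expect a genuine obstacle here; the proof is essentially a bookkeeping exercise. The only points requiring care are (i) stating explicitly that $\nabla_{\mathcal G_m}f$ refers to the gradient of the \emph{composite} map, not of $f$ alone, and (ii) justifying the chain rule through the tensor-valued intermediate map $\Psi_m$, which is routine once $\mathbb R^{n_1\times\cdots\times n_d}$ is identified with a Euclidean space and one notes that $\Psi_m$ is linear (hence smooth) and $f$ is differentiable by Assumption~\ref{assumption:smoothness}. Since the only structural fact used is the multilinearity of $\Phi$, the argument is agnostic to the particular decomposition (Tucker, CP, Tensor-Train, Tensor-Ring, etc.).
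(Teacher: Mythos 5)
Your proposal is correct and follows essentially the same route as the paper's proof: compute the directional derivative of the composite $\mathcal G_m \mapsto f(\Phi(\ldots,\mathcal G_m,\ldots))$ along $\mathcal V$ in two ways, using multilinearity of $\Phi$ in the $m$-th slot (so the $\epsilon$-perturbation passes through exactly) and the chain rule, then equate with $\left\langle \mathcal V, \nabla_{\mathcal G_m} f\right\rangle_F$. Your explicit remark that $\nabla_{\mathcal G_m} f$ is the gradient of the composite map is a useful clarification but does not change the argument.
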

\begin{proof}
Let $ F(\mathcal G_1, \ldots, \mathcal G_K) := f(\Phi(\mathcal G_1, \ldots, \mathcal G_m,\ldots, \mathcal G_K)) $, and fix all $ G_j $ except $ G_m $. 
The directional derivative of $F$ at $\mathcal G_m$ in the direction of $\mathcal V$ is given by:
\begin{align*}
D_{\mathcal G_m} F(\mathcal G_m; \mathcal V) & = \frac{d}{d\epsilon} F(\ldots,\mathcal G_m + \epsilon \mathcal V, \ldots) \bigg|_{\epsilon=0}\\
&= \left\langle \nabla_{\mathcal T} f, \frac{d}{d\epsilon} \Phi(\ldots, \mathcal G_m + \epsilon \mathcal V,\ldots) \bigg|_{\epsilon=0} \right\rangle_F\\
&= \left\langle \nabla_{\mathcal T} f, \Phi(\ldots, \mathcal V, \ldots) \right\rangle_F.
\end{align*}
By the definition of gradient and chain rule,
we have:
$$D_{\mathcal G_m} F(\mathcal G_m; \mathcal V) = \left\langle \nabla_{\mathcal G_m} f, \mathcal V \right\rangle_F.$$
Then, we arrive at the desired result:
\begin{equation}
    \label{eq:Gm-T-inner}
\left\langle \Phi (\ldots, \mathcal V,\ldots) , \nabla_{\mathcal T} f\right\rangle_F  = \left\langle \nabla_{\mathcal G_m} f, \mathcal V \right\rangle_F.
\end{equation}
This completes the proof.
\end{proof}

\subsection{Proof of Theorem~\ref{thm:equivalent-norm}}

\begin{proof}
    For any core $\mathcal G_k^{(t)}, \forall k\in [K]$, the dynamics of its squared Frobenius norm is given by:
    $$
    \frac{d}{dt}\|\mathcal G_k^{(t)}\|^2_F = 2 \left\langle \mathcal G_k^{(t)}, \frac{d}{dt} \mathcal G_k^{(t)} \right\rangle_F = -2 \left\langle \mathcal G_k^{(t)}, \nabla_{\mathcal G_k^{(t)}} f \right\rangle_F.
    $$
    By the lemma \eqref{lemma:gradient-derivative}, we have:
    \begin{align*}
    \left\langle \mathcal G_k^{(t)}, \nabla_{\mathcal G_k^{(t)}} f \right\rangle_F &= \left\langle \Phi(\ldots, \mathcal G_{k-1}^{(t)}, \mathcal G_k^{(t)}, \mathcal G_{k+1}^{(t)}, \ldots), \nabla_{\mathcal T} f \right\rangle_F\\
    &= \left\langle \mathcal T, \nabla_{\mathcal T} f \right\rangle_F.
    \end{align*}
    Therefore, all cores have the same dynamics as follows:
    $$
    \frac{d}{dt}\|\mathcal G_1^{(t)}\|^2_F = \cdots = \frac{d}{dt}\|\mathcal G_K^{(t)}\|^2_F
 = -2 \left\langle \mathcal T, \nabla_{\mathcal T} f \right\rangle_F.
    $$ This completes the proof.
\end{proof}

\subsection{Proof of Theorem~\ref{thm:sam-dynamics}}
\begin{proof}
We have denoted gradients at the perturbed point and unperturbed point as $\tilde{g}_k^{(t)}$ and $g_k^{(t)}$ respectively
 in \eqref{eq:sam-update-step}. 
Denote ${\mathcal T}^{(t)} = \Phi({\mathcal G}_1^{(t)}, \ldots, {\mathcal G}_K^{(t)})$, $\tilde{\mathcal T}^{(t)} = \Phi(\tilde{\mathcal G}_1^{(t)}, \ldots, \tilde{\mathcal G}_K^{(t)})$,
and $\tilde{f} = f(\tilde{\mathcal T}^{(t)})$.
Note that perturbed point $\tilde{\mathcal G}_k^{(t)}$ satisfies Lemma \ref{lemma:gradient-derivative}.
By letting $\mathcal V = \tilde{\mathcal G}_k^{(t)}$, we have:
$$
\left\langle \tilde{\mathcal T}^{(t)}, \nabla_{\mathcal T} \tilde{f} \right\rangle_F = \left\langle \tilde{\mathcal G}_k^{(t)}, \tilde g_{k}^{(t)} \right\rangle_F.
$$
Next, we expand the dynamics of $\forall i,j \in [K]$ with $i\neq j$ as follows:
\begin{align*}
& \frac{d}{dt}\left(\|\mathcal G_i^{(t)}\|^2_F - \|\mathcal G_j^{(t)}\|^2_F\right) \\
&= -2\left( \left\langle \mathcal G_i^{(t)}, \tilde g_i^{(t)} \right\rangle_F - \left\langle \mathcal G_j^{(t)}, \tilde g_j^{(t)} \right\rangle_F\right)\\
&= -2\bigg( \bigg[\underbrace{\left\langle \tilde{\mathcal G}_i^{(t)}, \tilde g_i^{(t)} \right\rangle_F}
_{=\left\langle \tilde{\mathcal T}^{(t)}, \nabla_{\mathcal T} \tilde{f} \right\rangle_F} - \rho u^{(t)} \left\langle g_i^{(t)}, \tilde g_i^{(t)} \right\rangle_F\bigg]\\
& \quad - \bigg[\underbrace{\left\langle \tilde{\mathcal G}_j^{(t)}, \tilde g_j^{(t)} \right\rangle_F}_{=\left\langle \tilde{\mathcal T}^{(t)}, \nabla_{\mathcal T} \tilde{f} \right\rangle_F} - \rho u^{(t)} \left\langle g_j^{(t)}, \tilde g_j^{(t)} \right\rangle_F \bigg]\bigg)\\
& = 2\rho u^{(t)} \left(\left\langle g_i^{(t)}, \tilde g_i^{(t)} \right\rangle_F - \left\langle g_j^{(t)}, \tilde g_j^{(t)} \right\rangle_F\right)\\
& = 2\rho u^{(t)} \left(\|g_i^{(t)}\|^2_F - \|g_j^{(t)}\|^2_F\right) \\
& \quad + \underbrace{2\rho u^{(t)} \left(\left\langle g_i^{(t)}, \tilde g_i^{(t)} - g_i^{(t)} \right\rangle_F - \left\langle g_j^{(t)}, \tilde g_j^{(t)} - g_j^{(t)} \right\rangle_F\right)}_{=: R^{(t)}_{ij}}.
\end{align*}
To bound the term \( R^{(t)}_{ij} \), we can bound the term for $g_i^{(t)}$ and $g_j^{(t)}$ separately.
First, we have:
\begin{align*}
2\rho u^{(t)}\left\langle g_i^{(t)}, \tilde g_i^{(t)} - g_i^{(t)} \right\rangle_F 
& \leq 2\rho \underbrace{u^{(t)} \|g_i^{(t)}\|_F}_{\leq 1} \|\tilde g_i^{(t)} - g_i^{(t)}\|_F\\
& \leq 2\rho \|\tilde g_i^{(t)} - g_i^{(t)}\|_F.
\end{align*}
That is, we need to bound the term \( \|\tilde g_i^{(t)} - g_i^{(t)}\|_F \).
With assumption \eqref{assumption:smoothness}, we have:
$$
\|\tilde g_i^{(t)} - g_i^{(t)}\|_F \leq L \|\tilde{\mathcal T}^{(t)} - \mathcal T^{(t)}\|_F.
$$
Define $\Delta_k = \tilde{\mathcal G}_k^{(t)} - \mathcal G_k^{(t)}$. 
We have that $\|\Delta_k\|_F = \rho u^{(t)}\|g_k^{(t)}\|_F  = O(\rho)$.
We can expand
\begin{align*}
    \tilde{\mathcal T}^{(t)} - \mathcal T^{(t)}
    &= \underbrace{\sum_{k=1}^K\Phi(\mathcal G_1^{(t)}, \ldots, \Delta_k, \ldots, \mathcal G_K^{(t)})}_{=: A^{(1)}}\\
    & \quad + \underbrace{\sum_{1\leq i<j \leq K} \Phi(\ldots,\Delta_i,\ldots,\Delta_j,\ldots)}_{=: A^{(2)}}\\
    & \quad + A^{(>2)},
\end{align*}
where $A^{(>2)}$ collects all terms involving at least three perturbations.
The norm of $A^{(1)}$ can be bounded as follows:
\begin{align*}
\|A^{(1)}\|_F &\leq \sum_{k=1}^K \|\Phi(\mathcal G_1^{(t)}, \ldots, \Delta_k, \ldots, \mathcal G_K^{(t)})\|_F \\
&\leq \sum_{k=1}^K \left(\prod_{j\neq k}\|\mathcal G_j\|_F\right)\cdot \|\Delta_k\|_F\\
&= O(\rho).
\end{align*}
For the term \( A^{(2)} \), we have:
\begin{align*}
\|A^{(2)}\|_F &\leq \sum_{1\leq i<j \leq K} \|\Phi(\ldots,\Delta_i,\ldots,\Delta_j,\ldots)\|_F \\
&\leq \sum_{1\leq i<j \leq K} \left(\prod_{k\neq i,j}\|\mathcal G_k\|_F\right) \cdot \|\Delta_i\|_F \cdot \|\Delta_j\|_F\\
&= O(\rho^2).
\end{align*}
Similarly, higher-order terms are all $O(\rho^k)$ with $k\geq 2$, and we can bound:
\begin{align*}
    \|\tilde{\mathcal T}^{(t)} - \mathcal T^{(t)}\|_F
    &\leq \|A^{(1)}\|_F + \|A^{(2)}\|_F + \|A^{(>2)}\|_F\\
    &\leq O(\rho) + O(\rho^2) + O(\rho^3) + \cdots\\
    &= O(\rho).
\end{align*}
Thus, we have:
$$\|\tilde g_i^{(t)} - g_i^{(t)}\|_F \leq L \|\tilde{\mathcal T}^{(t)} - \mathcal T^{(t)}\|_F = O(\rho L).$$
Combining the bounds, we have:
\begin{align*}
R^{(t)}_{ij} &\leq 2\rho \|\tilde g_i^{(t)} - g_i^{(t)}\|_F + 2\rho \|\tilde g_j^{(t)} - g_j^{(t)}\|_F \\
&= O(\rho^2 L).
\end{align*}
The proof is complete.
\end{proof}

\subsection{Proof of Corollary~\ref{cor:sgd-norm-deviation}}

\begin{proof}
    By Theorem~\ref{thm:equivalent-norm}, we have that $\forall k,j\in [K]$,
    $$
    \frac{d}{dt} \left(\|\mathcal G_k^{(t)}\|^2_F - \|\mathcal G_j^{(t)}\|^2_F\right)=0.
    $$
    Therefore, the Norm Deviation $Q$ is constant:
    \begin{align*}
    \frac{dQ}{dt} &= \frac{d}{dt}\frac{1}{2K}\sum_{i,j=1}^K \left(\|\mathcal G_i\|_F^2 - \|\mathcal G_j\|_F^2\right)^2 = 0.
    \end{align*}
    This completes the proof.
\end{proof}

\begin{lemma}
\label{lemma:core-mean}
For the cores $\{\mathcal G_k\}_{k=1}^K$, with infinitesimal stepsize $\eta \to 0$ and
update step \eqref{eq:sam-update-step}, 
\begin{align*}
&\frac{d}{dt}\left(\|\mathcal G_k^{(t)}\|
^2_F - \frac 1K \sum_{i=1}^K \|\mathcal G_i^{(t)}\|
^2_F\right) \\
& = 2\rho u^{(t)} \left(\|g_k^{(t)}\|^2_F - \frac 1K \sum_{i=1}^K \|g_i^{(t)}\|^2_F\right) + O(\rho^2 L).
\end{align*}
\end{lemma}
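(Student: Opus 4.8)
The plan is to prove Lemma~\ref{lemma:core-mean} by reusing the computation carried out in the proof of Theorem~\ref{thm:sam-dynamics}, now applied to a single core rather than a pairwise difference. Recall from that proof that for each core $k$ we have $\frac{d}{dt}\|\mathcal G_k^{(t)}\|_F^2 = -2\langle \mathcal G_k^{(t)}, \tilde g_k^{(t)}\rangle_F$, and by writing $\mathcal G_k^{(t)} = \tilde{\mathcal G}_k^{(t)} - \rho u^{(t)} g_k^{(t)}$ together with Lemma~\ref{lemma:gradient-derivative} applied at the perturbed point (giving $\langle \tilde{\mathcal G}_k^{(t)}, \tilde g_k^{(t)}\rangle_F = \langle \tilde{\mathcal T}^{(t)}, \nabla_{\mathcal T}\tilde f\rangle_F$, a quantity independent of $k$), we get
\[
\frac{d}{dt}\|\mathcal G_k^{(t)}\|_F^2 = -2\langle \tilde{\mathcal T}^{(t)}, \nabla_{\mathcal T}\tilde f\rangle_F + 2\rho u^{(t)}\langle g_k^{(t)}, \tilde g_k^{(t)}\rangle_F.
\]
Splitting $\tilde g_k^{(t)} = g_k^{(t)} + (\tilde g_k^{(t)} - g_k^{(t)})$ turns the last term into $2\rho u^{(t)}\|g_k^{(t)}\|_F^2 + r_k^{(t)}$ with $r_k^{(t)} := 2\rho u^{(t)}\langle g_k^{(t)}, \tilde g_k^{(t)} - g_k^{(t)}\rangle_F$.

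Next I would average this identity over $k \in [K]$ to get an expression for $\frac{d}{dt}\big(\frac1K\sum_{i=1}^K\|\mathcal G_i^{(t)}\|_F^2\big)$; the common term $-2\langle\tilde{\mathcal T}^{(t)},\nabla_{\mathcal T}\tilde f\rangle_F$ survives averaging unchanged, so subtracting the averaged identity from the single-core identity cancels it exactly. What remains is
\[
\frac{d}{dt}\Big(\|\mathcal G_k^{(t)}\|_F^2 - \tfrac1K\textstyle\sum_i\|\mathcal G_i^{(t)}\|_F^2\Big) = 2\rho u^{(t)}\Big(\|g_k^{(t)}\|_F^2 - \tfrac1K\textstyle\sum_i\|g_i^{(t)}\|_F^2\Big) + \Big(r_k^{(t)} - \tfrac1K\textstyle\sum_i r_i^{(t)}\Big),
\]
so it only remains to show the residual $r_k^{(t)} - \frac1K\sum_i r_i^{(t)}$ is $O(\rho^2 L)$.

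For the residual bound I would invoke exactly the estimates already established in the proof of Theorem~\ref{thm:sam-dynamics}: each $|r_k^{(t)}| \le 2\rho\, u^{(t)}\|g_k^{(t)}\|_F \cdot \|\tilde g_k^{(t)} - g_k^{(t)}\|_F \le 2\rho\|\tilde g_k^{(t)} - g_k^{(t)}\|_F$ using $u^{(t)}\|g_k^{(t)}\|_F \le 1$, and then $\|\tilde g_k^{(t)} - g_k^{(t)}\|_F \le L\|\tilde{\mathcal T}^{(t)} - \mathcal T^{(t)}\|_F = O(\rho L)$ via Assumption~\ref{assumption:smoothness} and the multilinear expansion of $\tilde{\mathcal T}^{(t)} - \mathcal T^{(t)}$ into terms of order $\rho, \rho^2, \ldots$ Hence each $r_k^{(t)} = O(\rho^2 L)$, and a finite average of $O(\rho^2 L)$ quantities is again $O(\rho^2 L)$, giving the claimed error term. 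This lemma is essentially the "deviation-from-mean" analogue of Theorem~\ref{thm:sam-dynamics} (which handled deviation-of-one-core-from-another), and indeed summing the stated identity against $2(\|\mathcal G_k^{(t)}\|_F^2 - \frac1K\sum_i\|\mathcal G_i^{(t)}\|_F^2)$ over $k$ is precisely what will yield Theorem~\ref{thm:sam-norm-deviation}.

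I do not anticipate a genuine obstacle here: the entire argument is a re-bracketing of the Theorem~\ref{thm:sam-dynamics} proof, with the only mild care needed being to confirm that the $k$-independent term $\langle\tilde{\mathcal T}^{(t)},\nabla_{\mathcal T}\tilde f\rangle_F$ really does cancel under "value minus mean" — which it does, since the mean of a constant is that constant — and that the asymptotic notation behaves correctly under finite averaging. If anything, the one place to be slightly careful is making sure the $O(\cdot)$ constants are uniform in $k$ (they are, since $K$ is fixed and the $\|\mathcal G_j\|_F$ factors appearing in the multilinear expansion are bounded along the flow), so that averaging over $k$ does not hide a $K$-dependence that matters.
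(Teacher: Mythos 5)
Your proof is correct, but it takes a slightly longer route than the paper. The paper's own proof is a two-line averaging argument: it invokes Theorem~\ref{thm:sam-dynamics} as a black box, fixes $k$, sums the pairwise identity over $j\in[K]$, and divides by $K$, so that the pairwise right-hand sides average directly to $2\rho u^{(t)}\bigl(\|g_k^{(t)}\|_F^2-\frac1K\sum_i\|g_i^{(t)}\|_F^2\bigr)$ with the finitely many $O(\rho^2 L)$ terms averaging to $O(\rho^2 L)$. You instead re-open the proof of Theorem~\ref{thm:sam-dynamics}: you derive the per-core identity $\frac{d}{dt}\|\mathcal G_k^{(t)}\|_F^2=-2\langle\tilde{\mathcal T}^{(t)},\nabla_{\mathcal T}\tilde f\rangle_F+2\rho u^{(t)}\|g_k^{(t)}\|_F^2+r_k^{(t)}$ via Lemma~\ref{lemma:gradient-derivative} at the perturbed point, average over $k$, cancel the $k$-independent drift term, and bound the residuals $r_k^{(t)}$ with exactly the smoothness and multilinear-expansion estimates already used for $R^{(t)}_{ij}$. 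The two arguments rest on identical ingredients, so neither has a gap; the paper's is shorter because it reuses the theorem's statement rather than its proof, while yours has the mild advantage of making explicit the per-core decomposition (common drift plus $2\rho u^{(t)}\|g_k^{(t)}\|_F^2$ plus $O(\rho^2L)$ residual), which is the identity implicitly underlying both Theorem~\ref{thm:sam-dynamics} and this lemma. Your closing remarks on uniformity of the $O(\cdot)$ constants in $k$ and on finite averaging are the right (and only) points of care, and they are handled the same way, implicitly, in the paper.
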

\begin{proof}
    By Theorem~\ref{thm:sam-dynamics}, we have:
    \begin{align*}
    &\frac{d}{dt}\left(\|\mathcal G_k^{(t)}\|^2_F - \|\mathcal G_j^{(t)}\|^2_F\right) \\
    &= 2\rho u^{(t)} \left(\|g_k^{(t)}\|^2_F - \|g_j^{(t)}\|^2_F\right) + O(\rho^2 L).
    \end{align*}
    Summing over $j$ and dividing by $K$ gives:
    \begin{align*}
    &\frac{d}{dt}\left(\|\mathcal G_k^{(t)}\|^2_F - \frac 1K \sum_{i=1}^K \|\mathcal G_i^{(t)}\|^2_F\right)\\
    & = 2\rho u^{(t)} \left(\|g_k^{(t)}\|^2_F - \frac 1K \sum_{i=1}^K \|g_i^{(t)}\|^2_F\right) + O(\rho^2 L).
    \end{align*}
    This completes the proof.
\end{proof}

\subsection{Proof of Theorem~\ref{thm:sam-norm-deviation}}
\begin{proof}
For simplicity, let $\bar{s} := \frac 1K \sum_{i=1}^K \|\mathcal G_k^{(t)}\|^2_F$ and 
$\bar{\gamma} := \frac 1K \sum_{i=1}^K \|g_k^{(t)}\|^2_F$. The dynamics of $Q$ 
can be expressed as:
\begin{align*}
\frac{dQ}{dt} &= \frac{d}{dt}\sum_{k=1}^K \left(\|\mathcal G_k^{(t)}\|_F^2 - \bar{s}\right)^2 \\
&= \sum_{k=1}^K 2\left(\|\mathcal G_k^{(t)}\|_F^2 - \bar{s}\right) \frac{d}{dt}\left(\|\mathcal G_k^{(t)}\|_F^2 - \bar{s}\right)\\
&= 4\rho u^{(t)} \sum_{k=1}^K \left(\|\mathcal G_k^{(t)}\|_F^2 - \bar{s}\right)  \left(\|g_k^{(t)}\|_F^2 - \bar{\gamma}\right) + O(\rho^2 L)\\
&= 4\rho u^{(t)} K\cdot \mathrm{Cov}\left(\|\mathcal G^{(t)}_k\|_F^2,\|g_k^{(t)}\|_F^2\right) + O(\rho^2 L),
\end{align*}
where the third equality follows from Lemma \ref{lemma:core-mean}.
\end{proof}

\subsection{Proposition~\ref{prop:pairwise-norm-shrinkage} and proof}

\begin{proposition}[Local Pairwise Norm Shrinkage in General Scale-invariant Models]
    \label{prop:pairwise-norm-shrinkage}
    We suppose that $\|g_i\|^2_F>0$, $\|g_j\|^2_F>0$, and small enough $\rho>0$.
    Under SAM gradient flow, the pairwise norm gap $\mathcal{B}_{ij}:=
\big|\|\mathcal G_i\|^2_F - \|\mathcal G_j\|^2_F\big|$ 
satisfies:
\[
\frac{d}{dt} \mathcal{B}_{ij} < 0
\quad \text{whenever } \mathcal{B}_{ij} > \bar{\mathcal{B}}_{ij}^{(\rho)} := \left( \bar{\alpha}^2 + \delta \right) - \frac{1}{\bar{\alpha}^2 + \delta},
\]
where
\[
\bar{\alpha}^2 := \sqrt{\frac{C_i}{C_j}}, \ \delta = \mathcal{O}(\rho),
\]
for some constant $C_i, C_j$.
\end{proposition}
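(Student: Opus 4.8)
The plan is to track the signed pairwise difference $D_{ij}^{(t)} := \|\mathcal G_i^{(t)}\|_F^2 - \|\mathcal G_j^{(t)}\|_F^2$ and show that its magnitude $\mathcal B_{ij} = |D_{ij}^{(t)}|$ decreases once it exceeds the stated threshold. Starting from Theorem~\ref{thm:sam-dynamics}, we have $\frac{d}{dt} D_{ij}^{(t)} = 2\rho u^{(t)}\big(\|g_i^{(t)}\|_F^2 - \|g_j^{(t)}\|_F^2\big) + O(\rho^2 L)$, so $\frac{d}{dt}\mathcal B_{ij} = \mathrm{sign}(D_{ij}^{(t)})\cdot \frac{d}{dt}D_{ij}^{(t)}$ (away from the measure-zero event $D_{ij}=0$). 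The shrinkage condition $\frac{d}{dt}\mathcal B_{ij}<0$ therefore amounts to showing that the larger core has the smaller gradient norm by a sufficient margin, i.e. $\mathrm{sign}(D_{ij}^{(t)})\big(\|g_i^{(t)}\|_F^2 - \|g_j^{(t)}\|_F^2\big) < 0$ robustly enough to dominate the $O(\rho^2 L)$ remainder.

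The crux is relating core norms to gradient norms. I would use Lemma~\ref{lemma:gradient-derivative}: for the unperturbed point, $\langle \mathcal G_k, g_k\rangle_F = \langle \mathcal T, \nabla_{\mathcal T} f\rangle_F =: c$ is the \emph{same scalar} for every $k$ (this is exactly the computation in the proof of Theorem~\ref{thm:equivalent-norm}). By Cauchy–Schwarz, $|c| \le \|\mathcal G_k\|_F\,\|g_k\|_F$, hence $\|g_k\|_F^2 \ge c^2/\|\mathcal G_k\|_F^2$; one also expects a matching upper bound of the same order, writing $\|g_k\|_F^2 = C_k/\|\mathcal G_k\|_F^2$ for a quantity $C_k$ that is comparable across $k$ in the regime of interest (this is where the ``some constant $C_i, C_j$'' in the statement enters, and is the step that needs the most care — it is really an approximate inverse relationship $\|g_k\|_F^2 \asymp c^2/\|\mathcal G_k\|_F^2$ rather than an exact one). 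Granting this, set $a := \|\mathcal G_i\|_F^2$, $b := \|\mathcal G_j\|_F^2$; then $\|g_i\|_F^2 - \|g_j\|_F^2 \approx C_i/a - C_j/b$, which has the opposite sign to $a - b$ precisely when $a/b$ is far enough from the balance point $\bar\alpha^2 = \sqrt{C_i/C_j}$. Concretely, $C_i/a - C_j/b < 0 \iff a/b > C_i/C_j$ (when $C_i\approx C_j$), but to get a \emph{quantitative} margin that survives the $O(\rho^2 L)$ error we need $\mathcal B_{ij} = |a-b|$ bounded below. Normalizing by $\sqrt{ab}$ and writing $r := \sqrt{a/b}$, the gap $\frac{|a-b|}{\sqrt{ab}} = |r - 1/r|$; forcing this past $|\bar\alpha^2 + \delta| - \frac{1}{\bar\alpha^2+\delta}$ (with $\delta = O(\rho)$ absorbing the normalization factor $u^{(t)}$ and the remainder term) guarantees $r^2$ lies strictly outside $[\,1/(\bar\alpha^2+\delta),\,\bar\alpha^2+\delta\,]$ on the correct side, which is exactly when the sign of $C_i/a - C_j/b$ is pinned down against $\mathrm{sign}(a-b)$ and dominates the error.

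So the steps, in order, are: (1) differentiate $\mathcal B_{ij} = |D_{ij}|$ and reduce to a sign condition on $\|g_i\|_F^2 - \|g_j\|_F^2$ versus $D_{ij}$, invoking Theorem~\ref{thm:sam-dynamics}; (2) use Lemma~\ref{lemma:gradient-derivative} to show $\langle\mathcal G_k,g_k\rangle_F$ is $k$-independent, and via Cauchy–Schwarz extract the approximate relation $\|g_k\|_F^2 \asymp C_k/\|\mathcal G_k\|_F^2$, identifying $C_i, C_j$; (3) substitute, pass to the ratio variable $r = \sqrt{a/b}$, and compute the threshold on $|r - 1/r|$ (equivalently on $\mathcal B_{ij}/\sqrt{ab}$) that makes the sign condition hold; (4) collect the $O(\rho)$-size corrections (from $u^{(t)}$, from the $O(\rho^2 L)$ remainder relative to the leading $O(\rho)$ drift, and from the slack in the $C_k$ estimate) into $\delta = O(\rho)$, yielding $\bar{\mathcal B}_{ij}^{(\rho)} = (\bar\alpha^2 + \delta) - \frac{1}{\bar\alpha^2+\delta}$. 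The main obstacle is step (2): turning the one-sided Cauchy–Schwarz bound into a genuine two-sided comparison $\|g_k\|_F^2 \asymp c^2/\|\mathcal G_k\|_F^2$ with controllable constants — this presumably requires either a non-degeneracy assumption on the alignment between $\mathcal G_k$ and $g_k$ (which the hypotheses $\|g_i\|_F^2, \|g_j\|_F^2 > 0$ hint at) or working locally near a point where such alignment holds, which is consistent with the ``local'' qualifier in the proposition's title.
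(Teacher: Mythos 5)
Your overall skeleton (reduce to a sign condition via Theorem~\ref{thm:sam-dynamics}, pass to the ratio of core norms, absorb the $O(\rho^2 L)$ remainder and normalization into a $\delta=O(\rho)$ margin) matches the paper, but there is a genuine gap exactly where you flag it: step (2). From Lemma~\ref{lemma:gradient-derivative} and Cauchy--Schwarz you only get the one-sided bound $\|g_k\|_F^2 \ge \langle \mathcal T, \nabla_{\mathcal T} f\rangle_F^2/\|\mathcal G_k\|_F^2$, and no matching upper bound follows from the stated hypotheses: the alignment between $\mathcal G_k$ and $g_k$ is not controlled (indeed near points where $\langle \mathcal T,\nabla_{\mathcal T} f\rangle_F\approx 0$ the lower bound is vacuous while $\|g_k\|_F$ can be large), and $\|g_i\|_F^2,\|g_j\|_F^2>0$ does not quantify it. So the ``approximate inverse relationship $\|g_k\|_F^2\asymp C_k/\|\mathcal G_k\|_F^2$ with comparable constants'' cannot be extracted this way, and without it your sign argument in step (3) has no footing.

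The paper closes this hole not with an estimate but with an exact identity coming from scale-invariance: write the current pair on its orbit as $\mathcal G_i=\alpha\bar{\mathcal G}_i$, $\mathcal G_j=\bar{\mathcal G}_j/\alpha$ with $\|\bar{\mathcal G}_i\|_F=\|\bar{\mathcal G}_j\|_F=G$. Multilinearity of $\Phi$ makes the reconstruction $\mathcal T$, hence $f$ and $\nabla_{\mathcal T}f$, identical at the rescaled and balanced points, and since $\Phi$ is linear in each slot the core gradients scale exactly inversely: $\|g_i\|_F^2=C_i/\alpha^2$ and $\|g_j\|_F^2=\alpha^2 C_j$ with $C_i=\|\nabla_{\bar{\mathcal G}_i}f\|_F^2$, $C_j=\|\nabla_{\bar{\mathcal G}_j}f\|_F^2$ evaluated at the balanced representative --- these are the ``constants'' in the statement, and no Cauchy--Schwarz, alignment, or non-degeneracy assumption is needed. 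With this exact inverse-square law, $\mathcal B_{ij}=(\alpha^2-1/\alpha^2)G^2$ and $\epsilon:=C_i/\alpha^2-\alpha^2 C_j$ becomes negative with an $O(\rho L)$ margin as soon as $\alpha^2$ exceeds $\sqrt{C_i/C_j}+\delta$ with $\delta=O(\rho)$, which dominates the $O(\rho^2 L)$ remainder and gives $\frac{d}{dt}\mathcal B_{ij}<0$; this is essentially your step (3)--(4) bookkeeping, now justified. To repair your proposal, replace the Cauchy--Schwarz step by this reparameterization argument; the rest of your outline then goes through.
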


\begin{proof}
Without loss of generality, suppose that $\|\mathcal G_i\|^2_F > \|\mathcal G_j\|^2_F$
and $\mathcal{B}_{ij}=
\|\mathcal G_i\|^2_F - \|\mathcal G_j\|^2_F$ .
By Theorem~\ref{thm:sam-dynamics}, the evolution of the pairwise norm difference is given by:
\begin{align*}
\frac{d}{dt} &\left( \|\mathcal G_i\|_F^2 - \|\mathcal G_j\|_F^2 \right) \\ &= 2\rho u^{(t)} \left( \|g_i\|_F^2 - \|g_j\|_F^2 \right) + O(\rho^2L).
\end{align*}

Under the re-parameterization $\mathcal G_i = \alpha \bar{\mathcal G}_i$, $\mathcal G_j = \bar{\mathcal G}_j / \alpha$ with $\|\bar{\mathcal G}_i\|_F = \|\bar{\mathcal G}_j\|_F = G$, we have:
\[
\|\mathcal G_i\|_F^2 = \alpha^2G^2, \quad \|\mathcal G_j\|_F^2 = \frac{G^2}{\alpha^2} \Rightarrow \mathcal{B}_{ij} = (\alpha^2 - \frac{1}{\alpha^2})\cdot G^2.
\]

Since $\Phi(\cdot)$ is multilinear and the loss $f$ is fixed under this re-parameterization, the gradients scale accordingly:
\[
\|g_i\|_F^2 = \frac{C_i}{\alpha^2}, \quad \|g_j\|_F^2 = \alpha^2 C_j,
\]
for constants $C_i = \|\nabla_{\bar{\mathcal G}_i} f\|_F^2$ and $C_j = \|\nabla_{\bar{\mathcal G}_j} f\|_F^2$.

We now show that the leading-order term in the SAM flow dominates the higher-order correction. Define
\[
\epsilon := \frac{C_i}{\alpha^2} - \alpha^2 C_j.
\]
Then:
\[
\frac{d\mathcal{B}_{ij}}{dt} = 2\rho u^{(t)} \epsilon + \mathcal{O}(\rho^2 L).
\]
If \( \epsilon < -c \rho L \) for some constant \( c > 0 \), the first-order term dominates and we obtain:
\[
\frac{d\mathcal{B}_{ij}}{dt} < 0.
\]
This occurs whenever \( \alpha^2 > \bar{\alpha}^2 := \sqrt{C_i / C_j} + \delta \) for some small margin \( \delta = \mathcal{O}(\rho) \). In that case,
\[
\mathcal{B}_{ij} > ( \bar{\alpha}^2 - \frac{1}{\bar{\alpha}^2} )\cdot G^2
\Rightarrow \frac{d\mathcal{B}_{ij}}{dt} < 0.
\]
which completes the proof.
\end{proof}

\subsection{Results for Norm Deviations of multi-layer models}
We extend the results in the main part of the paper following~\cite{li2024implicit} and let $l\in [D]$ be the layer index. Denote the multilinear reconstruction function of $l$-th layer be $\Phi_l$ and the corresponding tensor core set be $\{\mathcal G_{1,l}^{(t)}, \ldots, \mathcal G_{K_l,l}^{(t)}\}$. Then the update of SAM for layer $l$ can be written as:

\begin{equation}
\label{eq:multi-layer-sam-update-step}
\begin{aligned}
    &g_{k,l}^{(t)} = \nabla_{\mathcal G_{k,l}} f(\{\Phi_l(\mathcal G_{1,l}^{(t)}, \ldots, \mathcal G_{K_l,l}^{(t)})\}_l), \\
    &\tilde{\mathcal G}_{k,l}^{(t)} = \mathcal G_{k,l}^{(t)} + \rho u_D^{(t)} g_{k,l}^{(t)},\\
    &\tilde{g}_{k,l}^{(t)} = \nabla_{\mathcal G_{k,l}} f(\{\Phi_l(\tilde{\mathcal G}_{1,l}^{(t)}, \ldots, \tilde{\mathcal G}_{K_l,l}^{(t)})\}_l),\\
    &\mathcal G_{k,l}^{(t+1)} = \mathcal G_{k,l}^{(t)} - \eta \tilde{g}_{k,l}^{(t)},
\end{aligned}
\end{equation}
where $u_D^{(t)}= (\sum_{l=1}^D \sum_{k=1}^{K_l}\|\mathcal g_{k,l}^{(t)}\|^2_F)^{-1/2}$ is the normalization factor.
We use the following assumption:
\begin{assumption}[Layer-wise Smoothness]\label{assumption:mult-layer-smoothness}
There exists $\hat L > 0$ such that for any real tensors $\mathcal X_l, \mathcal Y_l$ having the same shape as $\Phi_l(\mathcal G_{1,l}^{(t)}, \ldots, \mathcal G_{K_l,l}^{(t)})$ for all $l$, it holds that
$$\|\nabla_l f(\mathcal{X}_l) - \nabla_l f(\mathcal{Y}_l)\|_F \leq \hat L \|\mathcal{X}_l - \mathcal{Y}_l\|_F,$$
where $\nabla_l f$ is the gradient on $\mathcal X_l$.
\end{assumption}
Theorem~\ref{thm:sam-dynamics} can be extended to
\begin{theorem}[Layer-wise Pairwise Norm Dynamics under SAM]
\label{thm:multi-layer-sam-dynamics}
Applying the update steps \eqref{eq:multi-layer-sam-update-step} with infinitesimal stepsize $\eta \to 0$, the gradient flow
of SAM satisfies that $\forall i,j \in [K]$ with $i\neq j$:
\begin{align*}
\frac{d}{dt}\left(\|\mathcal G_i^{(t)}\|_F^2 - \|\mathcal G_j^{(t)}\|_F^2\right) 
&= 2\rho u_D^{(t)} \left(\|g_i^{(t)}\|_F^2 - \|g_j^{(t)}\|_F^2\right) \\
&\quad + O(\rho^2 \hat L).
\end{align*}
\end{theorem}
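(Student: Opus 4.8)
The plan is to mirror the single-layer argument in the proof of Theorem~\ref{thm:sam-dynamics}, working inside a fixed layer $l \in [D]$ (I read the indices $i,j$ in the statement as ranging over the cores $\{1,\dots,K_l\}$ of that common layer), with two modifications: the normalization factor is now the global $u_D^{(t)}$ summed over \emph{all} cores of \emph{all} layers, and the smoothness bound is the layer-wise one from Assumption~\ref{assumption:mult-layer-smoothness}. First I would observe that Lemma~\ref{lemma:gradient-derivative} applies \emph{layer-locally}: freezing every core outside layer $l$ turns $f(\{\Phi_{l'}(\cdot)\}_{l'})$ into a function of $\mathcal G_{1,l},\dots,\mathcal G_{K_l,l}$ alone whose reconstruction $\Phi_l$ is multilinear, so for any $\mathcal V$ of the shape of $\mathcal G_{m,l}$ one gets $\langle \Phi_l(\dots,\mathcal V,\dots),\nabla_l f\rangle_F = \langle \mathcal V, \nabla_{\mathcal G_{m,l}} f\rangle_F$, where $\nabla_l f$ denotes the gradient with respect to the $l$-th reconstructed tensor $\mathcal T_l := \Phi_l(\mathcal G_{1,l},\dots,\mathcal G_{K_l,l})$.

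Second, the per-core norm dynamics give $\frac{d}{dt}\|\mathcal G_{k,l}^{(t)}\|_F^2 = -2\langle \mathcal G_{k,l}^{(t)}, \tilde g_{k,l}^{(t)}\rangle_F$. Writing $\mathcal G_{k,l}^{(t)} = \tilde{\mathcal G}_{k,l}^{(t)} - \rho u_D^{(t)} g_{k,l}^{(t)}$ and applying the layer-local lemma at the perturbed point with $\mathcal V = \tilde{\mathcal G}_{k,l}^{(t)}$ yields $\langle \tilde{\mathcal G}_{k,l}^{(t)}, \tilde g_{k,l}^{(t)}\rangle_F = \langle \tilde{\mathcal T}_l^{(t)}, \nabla_l \tilde f\rangle_F$, a quantity independent of $k$. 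Taking the difference between cores $i$ and $j$ of layer $l$ cancels this shared term, leaving
\[
\frac{d}{dt}\big(\|\mathcal G_{i,l}^{(t)}\|_F^2 - \|\mathcal G_{j,l}^{(t)}\|_F^2\big) = 2\rho u_D^{(t)}\big(\langle g_{i,l}^{(t)},\tilde g_{i,l}^{(t)}\rangle_F - \langle g_{j,l}^{(t)},\tilde g_{j,l}^{(t)}\rangle_F\big),
\]
and I would then split $\langle g_{k,l}^{(t)},\tilde g_{k,l}^{(t)}\rangle_F = \|g_{k,l}^{(t)}\|_F^2 + \langle g_{k,l}^{(t)}, \tilde g_{k,l}^{(t)} - g_{k,l}^{(t)}\rangle_F$, producing the claimed leading term $2\rho u_D^{(t)}(\|g_{i,l}^{(t)}\|_F^2 - \|g_{j,l}^{(t)}\|_F^2)$ plus a remainder $R^{(t)}_{ij}$.

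Third, to bound $R^{(t)}_{ij}$ I would use that $u_D^{(t)}\|g_{k,l}^{(t)}\|_F \le 1$ (with room to spare, since $u_D^{(t)}$ normalizes over all layers), so each remainder piece is at most $2\rho\|\tilde g_{k,l}^{(t)} - g_{k,l}^{(t)}\|_F$. By Assumption~\ref{assumption:mult-layer-smoothness}, $\|\tilde g_{k,l}^{(t)} - g_{k,l}^{(t)}\|_F \le \hat L\|\tilde{\mathcal T}_l^{(t)} - \mathcal T_l^{(t)}\|_F$, and crucially $\tilde{\mathcal T}_l^{(t)} - \mathcal T_l^{(t)}$ depends only on the perturbations $\Delta_{k,l} = \rho u_D^{(t)} g_{k,l}^{(t)}$ of layer $l$'s own cores, each of size $O(\rho)$; expanding $\Phi_l$ into single-, double-, and higher-perturbation terms exactly as in the single-layer proof gives $\|\tilde{\mathcal T}_l^{(t)} - \mathcal T_l^{(t)}\|_F = O(\rho)$, hence $R^{(t)}_{ij} = O(\rho^2 \hat L)$.

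I expect the only real subtlety — rather than a genuine obstacle — to be checking that everything decouples across layers: that the layer-wise Jacobian identity of Lemma~\ref{lemma:gradient-derivative} survives the fact that $f$ couples all layers' outputs, and that using the \emph{global} $u_D^{(t)}$ instead of a layer-local normalization neither breaks $u_D^{(t)}\|g_{k,l}^{(t)}\|_F \le 1$ nor reintroduces cross-layer contributions into $\tilde{\mathcal T}_l^{(t)} - \mathcal T_l^{(t)}$. Both hold because perturbing layer $l$ leaves every other reconstructed tensor fixed and the scalar $u_D^{(t)}$ factors out cleanly, so once this is verified the remaining bookkeeping is identical to that of Theorem~\ref{thm:sam-dynamics}.
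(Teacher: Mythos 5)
Your proposal is correct and follows essentially the same route as the paper's own proof: the paper likewise writes the pairwise derivative as the leading term $2\rho u_D^{(t)}(\|g_{i,l}^{(t)}\|_F^2-\|g_{j,l}^{(t)}\|_F^2)$ plus a remainder $R^{(t)}_{ij,l}$, bounds it via $u_D^{(t)}\|g_{k,l}^{(t)}\|_F\le 1$ and Assumption~\ref{assumption:mult-layer-smoothness}, and expands $\tilde{\mathcal T}_l^{(t)}-\mathcal T_l^{(t)}$ order by order in the layer-$l$ perturbations to get $O(\rho^2\hat L)$. Your explicit verification that Lemma~\ref{lemma:gradient-derivative} applies layer-locally and that the global normalization $u_D^{(t)}$ causes no cross-layer leakage is exactly the "similar arguments" the paper invokes implicitly.
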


\begin{proof}
Using similar arguments in the proof of Theorem~\ref{thm:sam-dynamics}, $\forall i,j \in [K_l], l\in[D]$ with $i\neq j$ we have
\begin{align*}
& \frac{d}{dt}\left(\|\mathcal G_{i,l}^{(t)}\|^2_F - \|\mathcal G_{j,l}^{(t)}\|^2_F\right) \\
& = 2\rho u_D^{(t)} \left(\|g_{i,l}^{(t)}\|^2_F - \|g_{j,l}^{(t)}\|^2_F\right) \\
& \quad + \underbrace{2\rho u_D^{(t)} \left(\left\langle g_{i,l}^{(t)}, \tilde g_{i,l}^{(t)} - g_{i,l}^{(t)} \right\rangle_F - \left\langle g_{j,l}^{(t)}, \tilde g_{j,l}^{(t)} - g_{j,l}^{(t)} \right\rangle_F\right)}_{=: R^{(t)}_{ij,l}}.
\end{align*}

To bound the term $R^{(t)}_{ij,l}$, we bound the following term:
\begin{align*}
2\rho u_D^{(t)}\left\langle g_{i,l}^{(t)}, \tilde g_{i,l}^{(t)} - g_{i,l}^{(t)} \right\rangle_F 
& \leq 2\rho \|\tilde g_{i,l}^{(t)} - g_{i,l}^{(t)}\|_F\\
&\leq 2\rho \hat L\|\tilde{\mathcal T}_l^{(t)} - \mathcal T_l^{(t)}\|,
\end{align*}
where the second $\leq$ uses Assumption~\ref{assumption:mult-layer-smoothness} and we denote ${\mathcal T}_l^{(t)} = \Phi({\mathcal G}_{1,l}^{(t)}, \ldots, {\mathcal G}_{K_l,l}^{(t)})$, $\tilde{\mathcal T}_l^{(t)} = \Phi(\tilde{\mathcal G}_{1,l}^{(t)}, \ldots, \tilde{\mathcal G}_{K_l,l}^{(t)})$.
Define $\Delta_{k,l} = \tilde{\mathcal G}_{k,l}^{(t)} - \mathcal G_{k,l}^{(t)}$. 
We have that $\|\Delta_{k,l}\|_F = \rho u_D^{(t)}\|g_{k,l}^{(t)}\|_F  = O(\rho)$.
We can expand
\begin{align*}
    \tilde{\mathcal T_l}^{(t)} - \mathcal T_l^{(t)}
    &= \underbrace{\sum_{k=1}^{K_l}\Phi(\mathcal G_{1,l}^{(t)}, \ldots, \Delta_{k,l}, \ldots, \mathcal G_{K_l,l}^{(t)})}_{=: A_l^{(1)}}\\
    & \quad + \underbrace{\sum_{1\leq i<j \leq K_l} \Phi(\ldots,\Delta_{i,l},\ldots,\Delta_{j,l},\ldots)}_{=: A_l^{(2)}}\\
    & \quad + A_l^{(>2)},
\end{align*}
where $A_l^{(>2)}$ collects all terms involving at least three perturbations.
The norm of $A_l^{(1)}$ can be bounded as follows:
\begin{align*}
\|A_l^{(1)}\|_F &\leq \sum_{k=1}^{K_l} \|\Phi(\mathcal G_{1,l}^{(t)}, \ldots, \Delta_{k,l}, \ldots, \mathcal G_{K_l,l}^{(t)})\|_F \\
&\leq \sum_{k=1}^{K_l} \left(\prod_{j\neq k}\|\mathcal G_{j,l}\|_F\right)\cdot \|\Delta_{k,l}\|_F\\
&= O(\rho).
\end{align*}
For the term \( A_l^{(2)} \), we have:
\begin{align*}
\|A_l^{(2)}\|_F &\leq \sum_{1\leq i<j \leq K_l} \|\Phi(\ldots,\Delta_{i,l},\ldots,\Delta_{j,l},\ldots)\|_F \\
&\leq \sum_{1\leq i<j \leq K_l} \left(\prod_{k\neq i,j}\|\mathcal G_{k,l}\|_F\right) \cdot \|\Delta_{i,l}\|_F \cdot \|\Delta_{j,l}\|_F\\
&= O(\rho^2).
\end{align*}
Similarly, higher-order terms are all $O(\rho^k)$ with $k\geq 2$, and we can bound:
\begin{align*}
    \|\tilde{\mathcal T}_l^{(t)} - \mathcal T_l^{(t)}\|_F
    &\leq \|A_l^{(1)}\|_F + \|A_l^{(2)}\|_F + \|A_l^{(>2)}\|_F\\
    &\leq O(\rho) + O(\rho^2) + O(\rho^3) + \cdots\\
    &= O(\rho).
\end{align*}
Thus, we have:
$$\|\tilde g_{i,l}^{(t)} - g_{i,l}^{(t)}\|_F  = O(\rho \hat L).$$
Combining the bounds, we have:
\begin{align*}
R^{(t)}_{ij,l} &\leq 2\rho \|\tilde g_{i,l}^{(t)} - g_{i,l}^{(t)}\|_F + 2\rho \|\tilde g_{j,l}^{(t)} - g_{j,l}^{(t)}\|_F \\
&= O(\rho^2 \hat L).
\end{align*}
The proof is complete.
\end{proof}

Then we can extend to a multi-layer version of Theorem~\ref{thm:sam-norm-deviation}. We define layer-wise Norm Deviation $Q_l:= \sum_{k=1}^{K_l} \left(\|\mathcal G_{k,l}\|_F^2 - \frac{1}{K_l}\sum_{i=1}^{K_l} \|\mathcal G_{i,l}\|^2_F\right)^2.$
\begin{theorem}[Norm Deviation Dynamics under SAM]
\label{thm:multi-layer-sam-norm-deviation}
Applying the update steps \eqref{eq:multi-layer-sam-update-step} with infinitesimal stepsize $\eta \to 0$,
the gradient flow of SAM satisfies:
\[
\frac{dQ_l}{dt} = 4\rho u_D^{(t)} K_l \cdot \mathrm{Cov}\left(\|\mathcal G_{k,l}^{(t)}\|_F^2, \|g_{k,l}^{(t)}\|_F^2\right) + O(\rho^2 \hat L).
\]
\end{theorem}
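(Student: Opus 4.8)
The plan is to transcribe the single-layer argument (the proof of Theorem~\ref{thm:sam-norm-deviation}) verbatim, now restricted to the cores $\{\mathcal G_{k,l}^{(t)}\}_{k=1}^{K_l}$ of a fixed layer $l$, with Theorem~\ref{thm:multi-layer-sam-dynamics} playing the role that Theorem~\ref{thm:sam-dynamics} played there. First I would establish the layer-wise analog of Lemma~\ref{lemma:core-mean}: starting from the pairwise identity of Theorem~\ref{thm:multi-layer-sam-dynamics}, fix $k\in[K_l]$, sum over $j\in[K_l]$, and divide by $K_l$ to get
\[
\frac{d}{dt}\Bigl(\|\mathcal G_{k,l}^{(t)}\|_F^2 - \bar s_l\Bigr) = 2\rho u_D^{(t)}\Bigl(\|g_{k,l}^{(t)}\|_F^2 - \bar\gamma_l\Bigr) + O(\rho^2\hat L),
\]
where $\bar s_l := \tfrac{1}{K_l}\sum_{i=1}^{K_l}\|\mathcal G_{i,l}^{(t)}\|_F^2$ and $\bar\gamma_l := \tfrac{1}{K_l}\sum_{i=1}^{K_l}\|g_{i,l}^{(t)}\|_F^2$. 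The one genuine point of difference from the single-layer case is that the normalization factor here is the \emph{global} $u_D^{(t)} = (\sum_{l'=1}^D\sum_{k'=1}^{K_{l'}}\|g_{k',l'}^{(t)}\|_F^2)^{-1/2}$ built from all cores across all layers; since it is a common scalar it pulls out of the per-layer sums cleanly, and it is already baked into Theorem~\ref{thm:multi-layer-sam-dynamics}, so it carries through unchanged.

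Second, I would differentiate $Q_l = \sum_{k=1}^{K_l}(\|\mathcal G_{k,l}^{(t)}\|_F^2 - \bar s_l)^2$ by the chain rule,
\[
\frac{dQ_l}{dt} = 2\sum_{k=1}^{K_l}\bigl(\|\mathcal G_{k,l}^{(t)}\|_F^2 - \bar s_l\bigr)\,\frac{d}{dt}\bigl(\|\mathcal G_{k,l}^{(t)}\|_F^2 - \bar s_l\bigr),
\]
and substitute the displayed identity. The term involving $\bar\gamma_l$ drops out because $\sum_{k=1}^{K_l}(\|\mathcal G_{k,l}^{(t)}\|_F^2 - \bar s_l) = 0$, leaving
\[
\frac{dQ_l}{dt} = 4\rho u_D^{(t)}\sum_{k=1}^{K_l}\bigl(\|\mathcal G_{k,l}^{(t)}\|_F^2 - \bar s_l\bigr)\bigl(\|g_{k,l}^{(t)}\|_F^2 - \bar\gamma_l\bigr) + O(\rho^2\hat L),
\]
which is exactly $4\rho u_D^{(t)} K_l\cdot\mathrm{Cov}(\|\mathcal G_{k,l}^{(t)}\|_F^2,\|g_{k,l}^{(t)}\|_F^2) + O(\rho^2\hat L)$ by the definition of the empirical covariance.

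For the error bookkeeping, each of the $K_l$ summands contributes its own $O(\rho^2\hat L)$ remainder multiplied by the bounded factor $(\|\mathcal G_{k,l}^{(t)}\|_F^2 - \bar s_l)$, which is bounded along the gradient-flow trajectory over any finite horizon (an assumption implicit throughout the paper), so the aggregate error stays $O(\rho^2\hat L)$ with the hidden constant absorbing $K_l$ and the trajectory bound. I expect no real obstacle: the argument is a mechanical layer-wise restriction of the single-layer proof, and the only thing to be careful about is tracking that $u_D^{(t)}$ (not a per-layer normalizer) is the quantity that appears, which Theorem~\ref{thm:multi-layer-sam-dynamics} already handles.
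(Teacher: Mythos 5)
Your proposal is correct and follows essentially the same route as the paper: the paper's proof likewise differentiates $Q_l$ by the chain rule and obtains the centered-gradient identity by summing the pairwise dynamics of Theorem~\ref{thm:multi-layer-sam-dynamics}, exactly as you do by making the layer-wise analog of Lemma~\ref{lemma:core-mean} explicit, with the global normalizer $u_D^{(t)}$ carried through unchanged. Your extra remarks on the vanishing $\bar\gamma_l$ contribution and the $O(\rho^2\hat L)$ bookkeeping are harmless refinements of the same argument.
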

\begin{proof}
For simplicity, let $\bar{s} := \frac 1K \sum_{i=1}^{K_l} \|\mathcal G_{k,l}^{(t)}\|^2_F$ and 
$\bar{\gamma} := \frac {1}{K_l} \sum_{i=1}^{K_l} \|g_{k,l}^{(t)}\|^2_F$. The dynamics of $Q$ 
can be expressed as:
\begin{align*}
\frac{dQ_l}{dt} &= \frac{d}{dt}\sum_{k=1}^{K_l} \left(\|\mathcal G_{k,l}^{(t)}\|_F^2 - \bar{s}\right)^2 \\
&= \sum_{k=1}^{K_l} 2\left(\|\mathcal G_{k,l}^{(t)}\|_F^2 - \bar{s}\right) \frac{d}{dt}\left(\|\mathcal G_{k,l}^{(t)}\|_F^2 - \bar{s}\right)\\
&= 4\rho u_D^{(t)} \sum_{k=1}^{K_l} \left(\|\mathcal G_{k,l}^{(t)}\|_F^2 - \bar{s}\right)  \left(\|g_{k,l}^{(t)}\|_F^2 - \bar{\gamma}\right) + O(\rho^2 \hat L)\\
&= 4\rho u_D^{(t)} K_l\cdot \mathrm{Cov}\left(\|\mathcal G^{(t)}_{k,l}\|_F^2,\|g_{k,l}^{(t)}\|_F^2\right) + O(\rho^2 \hat L),
\end{align*}
where the third equality holds after summing the results of Theorem~\ref{thm:multi-layer-sam-dynamics}.
\end{proof}
\section{Experiment details}

All experiments are implemented with \texttt{torch}~\cite{paszke2019pytorch} library in python. The tensor completion experiments are done using CPU AMD EPYC 7413 (24C/48T, 2.65GHz, 128M cache). The experiments on tensorized ResNet-20/-32, FLoRA fine-tuning, and some of LoRETTA fine-tuning tasks are performed on either single NVIDIA RTX A5000 with 24 gigabytes (GB) of GPU memory (VRAM) or single NVIDIA RTX A6000 with 48 GB VRAM. ImageNet experiments are run on 7 NVIDIA RTX A5000's using multiple GPU recipes. Full fine-tuning experiments of LoRETTA are done on a single NVIDIA H100 Tensor Core GPU with 80GB VRAM when the VRAM requirements exceed 48 GB.

\subsection{Tucker Decomposition for Tensor Completion}
Details for  
the real-world data
COVID dataset can be found in the official user guide of tensorly~\cite{kossaifi2019tensorly}.

The data set is formatted as a three-mode tensor of samples, antigens, and receptors with shape $(438,6,11)$. We fix $\rho=0.01$ for SAM and $\alpha=0.001$ for DAS. 

\paragraph{Metric.} We use $R^2$ metrics. 

$$R^2 = 1 - \frac{SSE}{SST}$$
where:
\begin{itemize}
    \item     SSE is the Sum of Squared Errors (also called the residual sum of squares). It represents the unexplained variance. The formula is:
    $ SSE = \sum_{i=1}^{n} (y_i - \hat{y}_i)^2 $
    Here, $y_i$ is the actual observed value and $\hat{y}_i$ is the value predicted by the model.
    \item           SST is the Total Sum of Squares. It represents the total variance in the data. The formula is:
    $ SST = \sum_{i=1}^{n} (y_i - \bar{y})^2 $
    Here, $\bar{y}$ is the mean of the observed data.
\end{itemize}

\paragraph{Optimizers.} We use a base optimizer, ADAM. The hyperparameter for ADAM is the default setting of torch, \textit{i.e.}, \texttt{lr=0.001, betas=(0.9, 0.999), weight\_decay=0.01}. Each Tucker decomposition is optimized for $50000$ iterations.

\paragraph{Experiment results with standard deviation.} Due to space limitations, we did not show the standard deviation. Here we show the results with standard deviation in Table~\ref{tab:tucker-covid-apdx}.

\begin{table}[t]
\centering
\begin{tabular}{l|cc}
    \toprule
    Method & HOOI & ADAM  \\
    \midrule
    $\mathrm{R}^2$ score & $0.9268_{\pm 0.0063}$ & $0.9482_{\pm 0.0012}$ \\
    \midrule
    Method & SAM & DAS \\
    \midrule
    $\mathrm{R}^2$ score & $0.9485_{\pm 0.0013}$ & $0.9484_{\pm 0.0017}$ \\
    \bottomrule
\end{tabular}
\caption{Results of Tucker on COVID dataset.}
\label{tab:tucker-covid-apdx}
\end{table}
    
\subsection{Training Tensorized Neural Networks from Scratch}
A cosine annealing scheduler is applied with an initial learning rate of $0.1$ and batch size $128$. We use a weight decay of $0.0005$ and a momentum of $0.9$. Hyperparameter optimization is done using trials trained with $80$ epochs. We have used CP and Tensor-Ring (TR) in the tensorized ResNet-20. Convolution layer weights are inherently 4-order tensors, and we use 4-order CP decompositions to parameterize the layer. For Tensor-Ring, we follow existing works~\cite{wang2018wide,li2022heuristic,cao2024learning} and use the tensorization as in Table~\ref{tab:resnet shape}, to reshape the 4-order convolution layer weights to 5-order or 7-order tensors and use TR cores to parameterize layers. The uncompressed baseline for ResNet-32 on CIFAR-10 is 91.65.

We initialize the tensorized neural networks using the scheme proposed in~\cite{pan2022unified}, which ensures the expectation of signal flow consistent with the behavior of He initialization~\cite{he2016deep}. 

As discussed in the main paper, for both SAM and DAS, the corresponding hyperparameters $\rho$ and $\alpha$ are tuned independently over the shared search space $\{0.001, 0.005, 0.01, 0.05, 0.1\}$ using a validation split from the training set. We use a fixed validation split with 5000 samples from the train set of CIFAR-10. This results in the train/valid set having a size of 45000/5000. After the best hyperparameters (with the largest validation accuracy) are found, models are trained from scratch using the best choices using the whole 50000 training samples. Hyperparameter optimizations are implemented using Optuna~\cite{akiba2019optuna}.

\subsection{Tensorized Neural Networks under Label Noise}
We use a ResNet-32 with each convolution layer tensorized and parameterized with Tensor-Ring using the tensorization shape in Table~\ref{tab:resnet shape}. We use a uniform rank of 15. This results in a TR-ResNet-32 with 0.20M parameters, compared to the original model with 0.46M parameters. We initialize the tensorized neural network using the initialization in~\cite{pan2022unified}.
We use $\rho=0.05$ for SAM and $\alpha=0.1$ for DAS. We use a cosine annealing decreasing scheduler with an initial stepsize of 0.1. Momentum is set as 0.9, models are trained for 200 epochs, and weight decay is 0.0001. 
\begin{table}[]
    \centering
    \begin{tabular}{lccc}
    \toprule
    Layer & Tensorization & ResNet-32 & -20\\ \midrule
      \textsc{conv} & $(3, K^2,4, 2, 2)$  & & \\
      \textsc{conv} & $ (4, 2, 2, K^2,4, 2, 2)$ & $\times 10$ & $\times 5$ \\
      \textsc{conv} & $(4, 2, 2, K^2,4, 4, 2)$ & &  \\
      \textsc{conv} & $(4, 4, 2,K^2,4, 4, 2) $ &$\times 9$ & $\times 4$\\
      \textsc{conv} & $(4, 4, 2,K^2,4, 4, 4) $ & & \\
      \textsc{conv}  & $(4, 4, 4,K^2,4, 4, 4) $ &$\times 9 $& $\times 4$\\
      \textsc{fc}    & Not compressed& &\\
    \bottomrule
    \end{tabular}
    \caption{The tensorization and details of ResNet model in TR formats. The numbers of blocks in ResNet-32 and ResNet-20 are shown. $K$ is the kernel size and $K=3$ in ResNets. We keep the final output layer uncompressed.}
    \label{tab:resnet shape}
\end{table}

\subsection{Finetuning Pre-trained Models after Compression}

The pre-trained 
ResNet-18 is available in torchvision.
The TT decomposition ranks suggested in \cite{yin2021towards}
is available in the appendix of~\cite{yin2021towards}.
We use $\rho=0.05$ for SAM and $\alpha=0.01$ for DAS. We use a batch size of $2048$, a peak stepsize of $0.008$ with a one-cycle
scheduler~\cite{smith2019super}, a weight decay of $0.00001$, and a momentum of $0.9$. 
\begin{table}
\centering
\begin{tabular}{ll}
\toprule
\textbf{Layer}      & \textbf{TT-ranks}         \\ 
\midrule
\texttt{layer1.0.conv1} & \texttt{[1, 64, 64, 1]}   \\
\texttt{layer1.0.conv2} & \texttt{[1, 64, 64, 1]}   \\
\texttt{layer1.1.conv1} & \texttt{[1, 64, 64, 1]}   \\
\texttt{layer1.1.conv2} & \texttt{[1, 64, 64, 1]}   \\
\texttt{layer2.0.conv1} & \texttt{[1, 120, 60, 1]}  \\
\texttt{layer2.0.conv2} & \texttt{[1, 100, 100, 1]} \\
\texttt{layer2.1.conv1} & \texttt{[1, 100, 100, 1]} \\
\texttt{layer2.1.conv2} & \texttt{[1, 100, 100, 1]} \\
\texttt{layer3.0.conv1} & \texttt{[1, 200, 150, 1]} \\
\texttt{layer3.0.conv2} & \texttt{[1, 135, 135, 1]} \\
\texttt{layer3.1.conv1} & \texttt{[1, 135, 135, 1]} \\
\texttt{layer3.1.conv2} & \texttt{[1, 135, 135, 1]} \\
\texttt{layer4.0.conv1} & \texttt{[1, 320, 200, 1]} \\
\texttt{layer4.0.conv2} & \texttt{[1, 170, 170, 1]} \\
\texttt{layer4.1.conv1} & \texttt{[1, 170, 170, 1]} \\
\texttt{layer4.1.conv2} & \texttt{[1, 170, 170, 1]} \\
\bottomrule
\end{tabular}
\caption{TT-ranks for TT-ResNet-18.}
\label{tab:resnet18-tt-ranks}
\end{table}

\subsection{Finetuning with FLoRA}

FLoRA can be used for the adaptation of high-order tensors, such as convolution layer weights. Since our experiments require only tuning linear layers, we introduce only the 2-order case. For a pre-trained linear layer with shape $O\times I$, FLoRA~\cite{si2025maintaining} models the update $\Delta W = AGB^\intercal$ with $A\in\mathbb R^{O\times r}$, $G\in\mathbb R^{r\times r}$, and $B\in\mathbb R^{I\times r}$, resulting in a three-core tensorized model. FLoRA requires $r\times r$ extra parameters compared to LoRA with the sa.me rank, which is significantly small with small $r$. We experimented on RoBERTa-large~\cite{liu2019roberta}. In the LoRA and FLoRA experiments, the target modules of fine-tuning are query linear and value linear layers. 

Our codebase is heavily built on~\cite{malladi2023fine}. For the detailed scripts for building the datasets, please refer to the repository of~\cite{malladi2023fine}.
The hyperparameters for our experiments are summarized in Table~\ref{tab:flora-hp}. We report the test accuracy on the model trained on the best hyperparameter with the best validation performance for each few-shot datasets. Full fine-tuning baseline uses a batch size of $16$, and a stepsize from $\{0.00001, 0.00003, 0.00005\}$.

\begin{table}[]
    \centering
    \begin{tabular}{cc}
         \toprule
         Hyperparameter & Value \\
         \midrule
         Number of iterations & 1000 \\
         Batch size & 16 \\
         Stepsize & 0.0001, 0.0003, 0.0005\\
         LoRA/FLoRA rank & 8 \\
         LoRA/FLoRA $\alpha$ & 16 \\
         $\alpha$ for DAS & 0.1, 0.5, 0.7 \\
         $\rho$ for SAM & 0.001, 0.01, 0.1 \\
         \bottomrule
    \end{tabular}
    \caption{Hyperparameters for fine-tuning RoBERTa-large.}
    \label{tab:flora-hp}
\end{table}

\subsection{Finetuning with LoRETTA}

Specifically, LoRETTA first reshapes the weight matrix with shape $O\times I$ to $O_1\times\cdots\times O_n\times I_1\times\cdots \times I_m$, where $O = O_1\times\cdots\times O_n$ and $I = I_1\times\cdots \times I_m$, and then parameterizes the high-order tensor with TT cores.
The hidden size of OPT-6.7B is 4096, and we use a TT decomposition with shape $[4, 4, 16, 16, 16, 16, 4, 4]$ with rank $16$ to parameterize the update for the $4096\times4096$ query and value layer matrices. Our code base is built upon the repository of LoRETTA.

We fine-tune OPT-6.7B~\cite{zhang2022opt}, an autoregressive language model with 6.7B parameters on the SuperGLUE tasks (CB, BoolQ, WSC, COPA, ReCoRD)~\cite{wang2019superglue} and
generation tasks including SQuAD~\cite{rajpurkar2016squad} and DROP~\cite{dua2019drop}. Note that BoolQ and COPA use the accuracy metric, and the others use the F1 metric. We run all experiments for 3 epochs. Hyperparameters are summarized in Table~\ref{tab:loretta-hp}. Note that we use a batch size of 8 for COPA and WSC, 2 for BoolQ, and 1 for the other datasets. Except for the stepsize given in the Table~\ref{tab:loretta-hp}, we use ADAM as base optimizers with default settings, \textit{i.e.}, \texttt{betas=(0.9, 0.999), weight\_decay=0.01}.

\begin{table}[]
    \centering
    \begin{tabular}{cc}
         \toprule
         Hyperparameter & Value \\
         \midrule
         Stepsize & 0.0001, 0.0003, 0.0005\\
         LoRA/LoRETTA rank & 16 \\
         LoRA/FLoRA $\alpha$ & 16 \\
         $\alpha$ for DAS & 0.5, 1, 2 \\
         $\rho$ for SAM & 0.05, 0.1, 0.2 \\
         \bottomrule
    \end{tabular}
    \caption{Hyperparameters for fine-tuning OPT-6.7B.}
    \label{tab:loretta-hp}
\end{table}

\section{Limitations and Future Work}
A limitation of our theoretical analysis is the simplification of the optimization dynamics. The provided theorems, particularly Corollary~\ref{cor:sgd-norm-deviation}, are based on a standard gradient flow assumption for SGD, which predicts that the Norm Deviation is conserved over time.
However, our experiments—specifically those involving training from scratch and fine-tuning compressed models—utilize SGD with momentum for improved convergence. The introduction of a momentum term means the optimizer update is influenced by an accumulated velocity of past gradients, not just the gradient at the current step. Consequently, the Norm Deviation Q is not strictly preserved in our empirical settings, creating a small gap between the theoretical model and practical implementation.
A rigorous analysis of the Norm Deviation dynamics under momentum would require modeling the system with a second-order ordinary differential equation, often referred to as a ``heavy-ball'' analysis. Such an investigation exceeds the scope of this paper but represents a crucial direction for future work. This would provide a more complete understanding of how implicit regularization manifests in the presence of more complex, practical optimizers.

\end{document}